\newtheorem{definition}{Definition}
\newtheorem{example}{Example}
\newtheorem{lemma}{Lemma}
\newtheorem{theorem}{Theorem}
\newtheorem{proposition}{Proposition}
\newtheorem{corollary}{Corollary}
\newenvironment{customthm}[1]
  {\innercustomthm}
  {\endinnercustomthm}
\newcommand{\bomega}{\boldsymbol \omega}
\newcommand{\bone}{\boldsymbol 1}
\newcommand{\Rbb}{\mathbb{R}}
\newcommand{\Nbb}{\mathbb{N}}
\newcommand{\Bcal}{\mathcal{B}}
\newcommand{\Hcal}{\mathcal{H}}
\newcommand{\Lcal}{\mathcal{L}}
\newcommand{\Ocal}{\mathcal{O}}
\newcommand{\Xcal}{\mathcal{X}}
\DeclareMathOperator*{\argmin}{arg\,min}
\DeclareMathOperator*{\support}{support}
\DeclareMathOperator*{\sign}{sign}
\DeclareMathOperator*{\Exp}{\mathbb{E}}
\newcommand \al[1]{\begin{align*}
#1
\end{align*}
}
\newcommand \bra{\left\langle}
\newcommand \ket{\right\rangle}
\newcommand \braket[2]{\bra #1, #2 \ket}
\newcommand \parenth[1]{\left( #1 \right)}
\tiny\color{gray},
\title{Dual Averaging on Compactly-Supported Distributions\\
And Application to No-Regret Learning on a Continuum}
\date{}
\author{Walid Krichene
}
\begin{document}
\maketitle
\thispagestyle{empty}
\pagestyle{empty}

\begin{abstract}
We consider an online learning problem on a continuum. A decision maker is given a compact feasible set $S$, and is faced with the following sequential problem: at iteration~$t$, the decision maker chooses a distribution $x^{(t)} \in \Delta(S)$, then a loss function $\ell^{(t)} : S \to \Rbb_+$ is revealed, and the decision maker incurs expected loss $\braket{\ell^{(t)}}{x^{(t)}} = \Exp_{s \sim x^{(t)}} \ell^{(t)}(s)$. We view the problem as an online convex optimization problem on the space $\Delta(S)$ of Lebesgue-continnuous distributions on $S$. We prove a general regret bound for the Dual Averaging method on $L^2(S)$, then prove that dual averaging with $\omega$-potentials (a class of strongly convex regularizers) achieves sublinear regret when $S$ is uniformly fat (a condition weaker than convexity).
\end{abstract}
\section{Introduction}

We consider an online learning problem on a compact subset $S \subset \Rbb^n$. At each iteration $t \in \Nbb$, a decision maker chooses a distribution $x^{(t)}$ on $S$, then, a loss function $\ell^{(t)} : S \to \Rbb_+$ is revealed, and the decision maker incurs loss $\Exp_{s \sim x^{(t)}}[\ell^{(t)}(s)]$. This is summarized in Problem~\ref{prob:online}.

\begin{algorithm}
\begin{algorithmic}[1]
\FOR{$t \in \Nbb$}
\STATE Decision maker chooses distribution $x^{(t)}$ over $S$.
\STATE A loss function $\ell^{(t)} : S \to \Rbb_+$ is revealed. We assume $\ell^{(t)}$ is $L$-Lipschitz.
\STATE The decision maker incurs expected loss $\Exp_{s \sim x^{(t)}}[\ell^{(t)}(s)]$
\ENDFOR
\end{algorithmic}
\floatname{algorithm}{Problem}
\caption{Online decision problem with Lipschitz losses on $S$.}
\label{prob:online}
\end{algorithm}

The regret of the decision maker is defined as follows: for a given sequence of losses $(\ell^{(t)})_{t \in \Nbb}$, and a corresponding sequence of decisions $(x^{(t)})_{t \in \Nbb}$, the cumulative regret at time~$t$, denoted by
\[
R^{(t)} = \sum_{\tau = 1}^t \Exp_{s \sim x^{(t)}}[\ell^{(t)}(s)] - \inf_{s \in S} \sum_{\tau = 1}^t \ell^{(\tau)}(s)
\]
compares the expected loss cumulated by the decision maker to the infimum of the cumulative loss function. In particular, we seek to design algorithms for which the regret grows sub-linearly in $t$, \emph{for any sequence of losses} in a given class (the assumptions on the losses will later be made explicit). 

This sequential decision problem has a long history which dates back to Hannan~\cite{hannan1957approximations} and Blackwell~\cite{blackwell1956analog}, who formulated the problem in the context of repeated games. The notion of regret is closely related to the notion of consistent play (as defined by Hannan) and approachability (as defined by Blackwell). Beyond player dynamics in repeated games, online learning has many applications such as portfolio optimization~\cite{cover91universal,blum1999universalPortfolios} and machine learning~\cite{dekel2011distributed}.

Regret minimization is essential in the design and analysis of online learning algorithms~\cite{cesa2006prediction,bubeck2012regret}, and the study of player dynamics in repeated games~\cite{hart2001general,hart2003regret,stoltz2007learning,stein2011correlated}. In this article, we study the problem of designing sublinear regret algorithms under minimal assumptions on the feasible set $S$ and the sequence of losses $(\ell^{(t)})$.

When the feasible set $S$ is finite, and the losses $(\ell^{(t)})$ are uniformly bounded, the Hedge algorithm~\cite{cesa2006prediction}, also known as the multiplicative weight updates~\cite{arora2012multiplicative} or the exponentiated gradient method~\cite{kivinen1997exponentiated}, is known to achieve sublinear regret, and is easy to analyze and to implement. More general classes of algorithms with sublinear regret have been developed since. For example, the  the online mirror descent algorithm~\cite{bubeck2012regret}, an extension of the mirror descent method due to Nemirovski and Yudin~\cite{nemirovski1983problem}, is shown to have sublinear regret for any choice of strongly convex distance-generating function. Similarly, the dual averaging method~\cite{nesterov2009primaldual} is shown to achieve sublinear regret for any choice of strongly convex regularizer. Remarkably, both of these families of algorithms include the Hedge algorithm as a special case.

When the set $S$ is infinite, designing sublinear regret algorithms requires making additional assumptions on the class of loss functions $(\ell^{(t)})$, as well as the feasible set $S$. In~\cite{zinkevich2003online}, Zinkevitch considers an online problem on a convex $S$, for convex loss functions $\ell^{(t)}$. He shows that a simple gradient descent algorithm is guaranteed to have regret which grows as $\Ocal(\sqrt{t})$. In~\cite{hazan2007logarithmic}, Hazan et al. also study the online learning problem on convex $S$, and show that for some classes of loss functions, one can achieve logarithmic regret, i.e. $R^{(t)} = \Ocal(\log t)$. In particular, they show that logarithmic regret is achieved by the Newton method when the losses are $\alpha$-strongly convex, and by the Hedge algorithm when the losses are $\alpha$-exp concave (uniformly in $t$).

{\def\arraystretch{1.5}%
\begin{table}
\centering
\small
\begin{tabular}{|l|c|c|c|}
\hline
Assumptions on $\ell^{(t)}$ & convex & $\alpha$-exp-concave & uniformly ${L}$-Lipschitz \\
\hline
Assumptions on $S$ & convex & convex & $v$-uniformly fat \\
\hline
Method & Gradient descent \cite{zinkevich2003online} & Hedge \cite{hazan2007logarithmic} & 
\begin{tabular}{c}
Dual Averaging with strongly convex \\
$f$-divergence s.t. $f(x) = \Ocal(x^{1+\epsilon})$ (Section~\ref{sec:omega_potential})
\end{tabular} \\
\hline
Learning rates & $t^{-\frac{1}{2}} $ & $\alpha$ & $t^{-\frac{1}{2+n\epsilon}}$
\\
\hline
$R^{(t)}/t$ & $\Ocal ( t^{-\frac{1}{2}})$ & $\Ocal \bigl(t^{-1} \log{t}\bigr)$ & $\Ocal\parenth{ t^{-\frac{1}{2+n\epsilon}} }$
\\
\hline
\end{tabular}
\caption{Regret upper bounds for different classes of losses.}
\vspace{-2ex}
\label{table:rates}
\end{table}
}

In this article, we design sublinear regret algorithms under mild assumptions on the feasible set and the sequence of losses. In particular, we only assume that the losses are Lipschitz-continuous, and relax the convexity assumption on the set $S$. Our main result is summarized in Table~\ref{table:rates}, together with regret bounds for other classes of loss functions. We show that one can formulate the online learning problem as an optimization problem over a convex subset of $L^{2}(S)$, allowing us to use results from (infinite dimensional) convex analysis. By applying the dual averaging method of Nesterov to $L^2(S)$ we prove, in Section~\ref{sec:dual_averaging}, a general regret bound which holds for any choice of regularizer. In Section~\ref{sec:omega_potential}, we consider a particular class of regularizers, which can be expressed as Csisz\'ar divergences of $\omega$-potentials, and we derive sufficient conditions on the potential to (i) make the dual averaging solution efficiently computable, and (ii) to guarantee that the regret grows sublinearly on any sequence of uniformly Lipschitz losses. This results in a general class of algorithms which are efficient to implement and which have sublinear regret guarantees under mild assumptions on the feasible set and the class of losses. In Section~\ref{sec:conclusion}, we give concluding remarks, connections with related problems, and directions for future work.

\section{Dual averaging on $L^2(S)$}
\label{sec:dual_averaging}

We start by applying Nesterov's dual averaging method~\cite{nesterov2009primaldual} to our sequential decision problem viewed as an online optimization problem on a convex subset of $L^2(S)$, and derive a general regret bound for this algorithm.

\subsection{Dual Averaging on a Hilbert space}
Consider a Hilbert space~$E$, and a feasible set $\Xcal \subset E$, assumed to be closed and convex, and let $\| \cdot \|$ be a reference norm on $E$ (not necessarily the norm induced by the inner product).

Let $\psi : \Xcal \to \Rbb_+$ be proper, continuous, and Fr\'echet-differentiable on the interior of $\Xcal$, denoted by $\mathring \Xcal$. The Bregman divergence associated to $\psi$ is defined as follows:
\[
\begin{aligned}
D_\psi : \Xcal \times \mathring \Xcal & \to \Rbb_+ \\
(x, y) & \mapsto D_\psi(x, y) = \psi(x) - \psi(y) - \braket{\nabla \psi(y)}{x - y}
\end{aligned}
\]
The function $\psi$ is said to be $\ell_\psi$-strongly convex with respect to a reference norm $\| \cdot \|$ for all $x, y \in \Xcal \times \mathring \Xcal$,
\[
D_\psi(x, y) \geq \frac{\ell_\psi}{2} \|x - y\|^2
\]
It is $L_\psi$-smooth with respect to $\| \cdot \|$ if for all $x, y \in \Xcal \times \mathring \Xcal$,
\[
D_\psi(x, y) \leq \frac{L_\psi}{2} \|x - y\|^2
\]

As we describe below, strong convexity and smoothness are dual properties. Define the Fenchel-Legendre conjugate of $\psi$
\[
\psi^*(y) = - \inf_{x \in \Xcal} \psi(x) - \braket{y}{x}
\]
Note that the minimum is attained and the minimizer is unique since $\psi$ is strongly convex and $\Xcal$ is closed and convex (Theorem~11.9 in \cite{bauschke2011convex}). The gradient of $\psi^*$ is
\[
\nabla \psi^*(y) = \argmin_{x \in \Xcal} \psi(x) - \braket{y}{x}
\]
which we will refer to as the Bregman projection onto  $\Xcal$, since it can be written as
\al{
\nabla \psi^*(y) 
= \argmin_{x \in \Xcal} \psi(x) - \braket{\nabla \psi(\nabla \psi^{-1}(y))}{x} = \argmin_{x \in \Xcal}D_\psi(x, \nabla \psi^{-1}(y))
}
\begin{proposition}
\label{prop:smoothness_duality}
If $\psi$ is $\ell_\psi$-strongly convex with respect to $\|\cdot \|$, then $\psi^*$ is $\frac{1}{\ell_\psi}$-smooth with respect to the dual norm $\| \cdot \|_*$.
\end{proposition}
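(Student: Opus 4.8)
The plan is to prove the duality in two stages: first show that strong convexity of $\psi$ forces its conjugate gradient $\nabla\psi^*$ to be $\frac{1}{\ell_\psi}$-Lipschitz as a map from $(E, \|\cdot\|_*)$ to $(E, \|\cdot\|)$, and then upgrade this Lipschitz estimate to the claimed quadratic upper bound on the Bregman divergence $D_{\psi^*}$ by integrating the gradient along a segment. The algebra driving both stages is elementary; the work is mostly in setting up the right identity.

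For the first stage, fix two dual points $u, v$ and set $x = \nabla\psi^*(u)$ and $x' = \nabla\psi^*(v)$. Since the minimizer defining $\nabla\psi^*$ is attained in the interior, the first-order optimality condition gives the inverse relations $u = \nabla\psi(x)$ and $v = \nabla\psi(x')$. Writing out the symmetrized sum $D_\psi(x, x') + D_\psi(x', x)$, the $\psi$ terms cancel and what remains is the symmetric pairing $\braket{\nabla\psi(x) - \nabla\psi(x')}{x - x'} = \braket{u - v}{x - x'}$. Applying the strong-convexity inequality $D_\psi \geq \frac{\ell_\psi}{2}\|\cdot\|^2$ to each of the two Bregman terms yields $\braket{u-v}{x-x'} \geq \ell_\psi\|x - x'\|^2$. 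Bounding the left side above by $\|u - v\|_* \|x - x'\|$ (the definition of the dual norm) and dividing by $\|x - x'\|$ gives $\|\nabla\psi^*(u) - \nabla\psi^*(v)\| = \|x - x'\| \leq \frac{1}{\ell_\psi}\|u - v\|_*$, which is exactly the desired Lipschitz bound.

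For the second stage, I integrate along the segment $v + s(u-v)$, $s \in [0,1]$. Writing $\psi^*(u) - \psi^*(v) = \int_0^1 \braket{\nabla\psi^*(v + s(u-v))}{u-v}\,ds$ and subtracting the linear term $\braket{\nabla\psi^*(v)}{u-v}$ produces $D_{\psi^*}(u,v) = \int_0^1 \braket{\nabla\psi^*(v + s(u-v)) - \nabla\psi^*(v)}{u - v}\,ds$. Applying the dual pairing inequality and then the Lipschitz bound from the first stage bounds the integrand by $\frac{1}{\ell_\psi}\, s\, \|u-v\|_*^2$, and $\int_0^1 s\,ds = \tfrac12$ yields $D_{\psi^*}(u,v) \leq \frac{1}{2\ell_\psi}\|u-v\|_*^2$, as claimed.

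The main obstacle is not the algebra but the infinite-dimensional regularity bookkeeping: I must justify that $\psi^*$ is itself Fr\'echet-differentiable with $\nabla\psi^*$ given by the stated argmin (so that both stages are even meaningful), that the inverse relation $u = \nabla\psi(\nabla\psi^*(u))$ holds on the relevant domain, and that the integral representation of $\psi^*$ along a segment is valid in the Hilbert-space setting. These follow from the strong convexity of $\psi$, which guarantees existence, uniqueness, and interiority of the minimizer as already invoked via Theorem~11.9 of \cite{bauschke2011convex}, together with standard results on differentiability of conjugates of strongly convex functions; the quantitative estimates above then go through verbatim with $\braket{\cdot}{\cdot}$ the inner product and $\|\cdot\|_*$ the dual reference norm.
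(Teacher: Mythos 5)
Your two-stage plan---first a Lipschitz bound on $\nabla\psi^*$ obtained from the symmetrized Bregman divergence, then integration along a segment to upgrade it to the quadratic bound on $D_{\psi^*}$---is exactly the structure of the paper's proof. However, stage one as you wrote it rests on a claim that is false in this setting: that the minimizer defining $\nabla\psi^*(u)$ is attained in the interior, so that first-order optimality holds with equality and gives the inverse relation $u = \nabla\psi(\nabla\psi^*(u))$. Strong convexity guarantees existence and uniqueness of the minimizer (that is all Theorem~11.9 of \cite{bauschke2011convex} provides), \emph{not} interiority. Here the minimization defining $\psi^*$ is over the constrained set $\Xcal$ of nonnegative densities integrating to one, and the constraint is genuinely active in general: Proposition~\ref{prop:DA_solution} shows the minimizer has the form $\phi(-\eta_{t+1}(L^{(t)}(s)+\nu^\star))_+$, which vanishes on a set of positive measure, i.e.\ the minimizer sits on the boundary of the cone $\{x \geq 0 \text{ a.e.}\}$. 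So the inverse relation you invoke does not hold, and your identity $D_\psi(x,x') + D_\psi(x',x) = \braket{u-v}{x-x'}$ is unjustified as stated.

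The repair is small, and it is precisely what the paper does: use first-order optimality in variational-inequality form, $\braket{\nabla\psi(x) - u}{z - x} \geq 0$ for all $z \in \Xcal$, where $x = \nabla\psi^*(u)$, and similarly for $v$ and $x' = \nabla\psi^*(v)$. Taking $z = x'$ in the first inequality, $z = x$ in the second, and summing gives $\braket{u - v}{x - x'} \geq \braket{\nabla\psi(x) - \nabla\psi(x')}{x - x'} = D_\psi(x,x') + D_\psi(x',x) \geq \ell_\psi\|x - x'\|^2$. Your equality becomes an inequality, which points in the direction you need; everything downstream---the dual-norm bound $\braket{u-v}{x-x'} \leq \|u-v\|_*\,\|x-x'\|$, the resulting Lipschitz estimate $\|\nabla\psi^*(u)-\nabla\psi^*(v)\| \leq \frac{1}{\ell_\psi}\|u-v\|_*$, and the integral representation of $D_{\psi^*}$ in your second stage---goes through verbatim. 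With that one substitution your argument coincides with the paper's proof.
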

Proposition~\ref{prop:smoothness_duality} is an extension of Theorem~18.15 in~\cite{bauschke2011convex} to general norms, the proof is provided in the Appendix.

Given a sequence $(\ell^{(t)})$ of linear functionals in the dual space $E^*$, the method projects, at each step, the cumulative dual vector $- L^{(t)} = - \sum_{\tau = 1}^{t} \ell^{(\tau)}$, scaled by a step size $\eta_{t+1}$, onto the feasible set, using the Bregman projection $\nabla \psi^*$. This is  summarized in Algorithm~\ref{alg:dual_averaging}. Without loss of generality, we will assume that $\inf_{x \in \Xcal} \psi(x) = 0$.
\begin{algorithm}[h]
\begin{algorithmic}[1]
\FOR{$t \in \Nbb$}
\STATE Define $L^{(t)} = \sum_{\tau = 1}^{t} \ell^{(\tau)}$
\STATE Update
\begin{align}
x^{(t+1)} &= \nabla\psi^*(-\eta_{t+1} L^{(t)}) = \arg \min_{x \in \Xcal} \braket{L^{(t)}}{x} + \frac{1}{\eta_{t+1}} \psi(x)\label{eq:dual_averaging_update}
\end{align}
\ENDFOR
\end{algorithmic}
\caption{Dual averaging method with input sequence $(\ell^{(t)})$ and learning rates $(\eta_t)$}
\label{alg:dual_averaging}
\end{algorithm}
\subsection{Dual Averaging on $L^2(S)$}
In particular, we consider the case where $E = L^2(S)$, the Lebesgue space of square integrable functions on $S$, endowed with the inner product $\braket{f}{g} = \int_S f(s)g(s)\lambda(ds)$, where $\lambda$ is the scaled Lebesgue measure such that $\lambda(S) = 1$. Let the feasible set $\Xcal$ be
\begin{align*}
\Xcal := \bigl\{f \in L^2(S) : f \geq 0 \text{ a.e. and }\textstyle \int_S f(s) \lambda(ds) = 1\bigr\}
\end{align*}
Note that while $\Xcal$ is closed and convex, it is unbounded: if $A$ is a measurable subset of $S$, then $\frac{1}{\lambda(A)}1_{A} \in \Xcal$, and $\|\frac{1}{\lambda(A)} 1_A\|_2 = \frac{1}{\sqrt{\lambda(A)}}$, which can be arbitrarily large. 

An element $f \in \Xcal$ will be be identified with the probability distribution on $S$ with density $f$. The dual space is $E^* = L^2(S)$, and since $S$ is compact, $E^*$ contains, in particular, the set $C^0(S)$ of continuous functions on $S$. Problem~\ref{prob:online} can be viewed as follows: at each iteration $t$, the decision maker chooses an element of $\Xcal$, then an element $\ell^{(t)} \in C^0(S) \subset L^2(S)$ is revealed, and the decision maker incurs the expected loss $\braket{\ell^{(t)}}{x^{(t)}}$. Next, we define the regret and provide a first bound on the regret of the dual averaging method.

\begin{definition}
\label{def:regret}
Let $(\ell^{(t)})$ be a sequence of elements of $L^2(S)$, and consider the dual averaging algorithm on this sequence, with learning rates $(\eta_t)$. The cumulative regret of the algorithm is defined as
\[
R^{(t)} = \sup_{x \in \Xcal} \sum_{\tau = 1}^t \braket{\ell^{(\tau)}}{x^{(t)} - x} 
= \sum_{\tau = 1}^t \braket{\ell^{(\tau)}}{x^{(t)}} - \inf_{x \in \Xcal} \braket{\sum_{\tau = 1}^t \ell^{(\tau)}}{x}
\]
The regret is said to be sublinear if
$
\limsup_{t \to \infty} \frac{R^{(t)}}{t} \leq 0
$.
\end{definition}

The regret compares the cumulative loss of the algorithm, $\sum_{\tau = 1}^t \braket{\ell^{(\tau)}}{x^{(\tau)}}$, to the best cumulative loss of any stationary distribution $x$ (note that the infimum may not be attained).

\begin{lemma}[Dual Averaging Regret]
\label{lemma:dual_averaging_regret}
Consider the dual averaging method with dual sequence $(\ell^{(t)})$ and learning rates $(\eta^{(t)})$. Suppose that $\psi$ is $\ell_\psi$-strongly convex w.r.t. $\|\cdot\|$, and that the losses are bounded in the dual norm, uniformly in $t$, i.e. there exists $M>0$ such that for all $t$, $\|\ell^{(t)}\|_* \leq M$.  Then for all $t$ and all $x \in \Xcal$,
\al{
\sum_{\tau = 1}^t \braket{\ell^{(\tau)}}{x^{(\tau)} - x} 
&\leq\frac{1}{\eta_t}\psi(x) + \frac{M^2}{2\ell_\psi} \sum_{\tau = 1}^t \eta_\tau
}
\end{lemma}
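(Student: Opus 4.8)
The plan is to run the classical potential-function argument for dual averaging, using Proposition~\ref{prop:smoothness_duality} (smoothness of the conjugate) as the engine. The natural potential to track is
\[
\Phi_t := \frac{1}{\eta_t}\psi^*(-\eta_t L^{(t)}),
\]
and the target is a per-step inequality $\braket{\ell^{(t)}}{x^{(t)}} \le \Phi_{t-1} - \Phi_t + \frac{\eta_t}{2\ell_\psi}\|\ell^{(t)}\|_*^2$ that telescopes. The first step is to invoke Proposition~\ref{prop:smoothness_duality}: since $\psi$ is $\ell_\psi$-strongly convex, $\psi^*$ is $\frac{1}{\ell_\psi}$-smooth w.r.t.\ $\|\cdot\|_*$, i.e.\ $\psi^*(u) \le \psi^*(v) + \braket{\nabla\psi^*(v)}{u-v} + \frac{1}{2\ell_\psi}\|u-v\|_*^2$ for all $u,v \in E^*$.

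I would apply this at the \emph{fixed} step size $\eta_t$, with $v = -\eta_t L^{(t-1)}$ and $u = -\eta_t L^{(t)}$, so that $u - v = -\eta_t \ell^{(t)}$ and, by the update rule~\eqref{eq:dual_averaging_update}, $\nabla\psi^*(v) = x^{(t)}$. This yields
\[
\psi^*(-\eta_t L^{(t)}) \le \psi^*(-\eta_t L^{(t-1)}) - \eta_t\braket{\ell^{(t)}}{x^{(t)}} + \frac{\eta_t^2}{2\ell_\psi}\|\ell^{(t)}\|_*^2,
\]
and dividing by $\eta_t$ and rearranging gives $\braket{\ell^{(t)}}{x^{(t)}} \le \frac{1}{\eta_t}\psi^*(-\eta_t L^{(t-1)}) - \frac{1}{\eta_t}\psi^*(-\eta_t L^{(t)}) + \frac{\eta_t}{2\ell_\psi}\|\ell^{(t)}\|_*^2$.

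The main obstacle is that this is not yet a clean telescope: the subtracted term is $\frac{1}{\eta_t}\psi^*(-\eta_t L^{(t-1)})$, evaluated at the current step $\eta_t$, whereas the previous potential is $\Phi_{t-1} = \frac{1}{\eta_{t-1}}\psi^*(-\eta_{t-1}L^{(t-1)})$, at the old step $\eta_{t-1}$. To reconcile the two I would use monotonicity of the map $\eta \mapsto \frac{1}{\eta}\psi^*(-\eta L) = \sup_{x\in\Xcal}\bigl[-\braket{L}{x} - \frac{1}{\eta}\psi(x)\bigr]$, which is nondecreasing in $\eta$ precisely because $\psi \ge 0$. Hence, under the (standing) assumption that $(\eta_t)$ is nonincreasing, $\frac{1}{\eta_t}\psi^*(-\eta_t L^{(t-1)}) \le \frac{1}{\eta_{t-1}}\psi^*(-\eta_{t-1}L^{(t-1)}) = \Phi_{t-1}$, upgrading the per-step bound to $\braket{\ell^{(t)}}{x^{(t)}} \le \Phi_{t-1} - \Phi_t + \frac{\eta_t}{2\ell_\psi}\|\ell^{(t)}\|_*^2$. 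This monotonicity is the only place the decreasing-step structure enters, and it is the crux of the argument.

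Finally I would sum over $\tau = 1,\dots,t$. The telescope collapses to $\Phi_0 - \Phi_t$, and the base term vanishes: since $L^{(0)}=0$, we get $\Phi_0 = \frac{1}{\eta_1}\psi^*(0) = -\frac{1}{\eta_1}\inf_{x\in\Xcal}\psi(x) = 0$ by the normalization $\inf_\Xcal \psi = 0$. The surviving term $-\Phi_t = \frac{1}{\eta_t}\inf_{x\in\Xcal}\bigl[\eta_t\braket{L^{(t)}}{x} + \psi(x)\bigr]$ is then bounded above, for any fixed comparator $x\in\Xcal$, by $\braket{L^{(t)}}{x} + \frac{1}{\eta_t}\psi(x) = \sum_{\tau=1}^t\braket{\ell^{(\tau)}}{x} + \frac{1}{\eta_t}\psi(x)$, using the variational (Fenchel) characterization of $\psi^*$. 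Substituting $\|\ell^{(\tau)}\|_* \le M$ into the error terms and moving $\sum_\tau\braket{\ell^{(\tau)}}{x}$ to the left-hand side gives exactly $\sum_{\tau=1}^t\braket{\ell^{(\tau)}}{x^{(\tau)}-x} \le \frac{1}{\eta_t}\psi(x) + \frac{M^2}{2\ell_\psi}\sum_{\tau=1}^t\eta_\tau$. The only remaining subtleties are the $t=1$ base-case bookkeeping and confirming the smoothness inequality is legitimately applied at the (possibly boundary) points $-\eta_t L^{(t)}$, both of which I expect to be routine.
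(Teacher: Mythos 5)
Your proposal is correct and follows essentially the same route as the paper's proof: the same potential $\frac{1}{\eta_t}\psi^*(-\eta_t L^{(t)})$ (up to sign), the same application of Proposition~\ref{prop:smoothness_duality} at the fixed step size $\eta_t$, the same monotonicity-in-$\eta$ reconciliation (correctly flagged as requiring nonincreasing learning rates, an assumption the paper also uses implicitly), and the same telescoping and comparator bound via the Fenchel representation. The only difference is a sub-step: you prove monotonicity of $\eta \mapsto \frac{1}{\eta}\psi^*(-\eta L)$ from the variational representation $\sup_{x\in\Xcal}\bigl[-\braket{L}{x} - \frac{1}{\eta}\psi(x)\bigr]$ and $\psi \ge 0$, which is slightly more elementary than the paper's derivative-plus-convexity computation, but the argument is otherwise identical.
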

\begin{proof}
Define the potential function
\[
\xi(\eta, L) = -\frac{1}{\eta} \psi^*(-\eta L)
\]
We first show the following inequality:
\begin{equation}
\label{eq:DA_inst_regret_bound}
\braket{x^{(t)}}{\ell^{(t)}} \leq \xi(\eta_t, L^{(t)}) - \xi(\eta_{t-1}, L^{(t-1)}) + \frac{\eta_{t}}{2\ell_\psi} \|\ell^{(t)}\|_*^2
\end{equation}
Since $\psi$ is $\ell_\psi$-strongly convex w.r.t. $\|\cdot\|$, by Proposition~\ref{prop:smoothness_duality}, $\psi^*$ is $\frac{1}{\ell_\psi}$-smooth w.r.t. $\|\cdot\|_*$, therefore $D_{\psi^*}(-\eta_{t}L^{(t)}, -\eta_{t}L^{(t-1)}) \leq \frac{1}{2\ell_\psi} \|\eta_{t}L^{(t)} -\eta_{t}L^{(t-1)}\|_*^2$, i.e.
{
\al{
\psi^*(-\eta_{t}L^{(t)}) - \psi^*(-\eta_{t}L^{(t-1)}) 
&\leq  \braket{\nabla\psi^*(-\eta_{t}L^{(t-1)})}{-\eta_{t}L^{(t)} +\eta_{t}L^{(t-1)}} + \frac{1}{2\ell_\psi}\|\eta_{t}L^{(t)}-\eta_{t}L^{(t-1)}\|_*^2 \\
&= -\eta_{t}\braket{x^{(t)}}{\ell^{(t)}} + \frac{\eta_{t}^2}{2\ell_\psi}\|\ell^{(t)}\|_*^2
}
}
Thus
\[
\braket{x^{(t)}}{\ell^{(t)}} \leq \xi(\eta_t, L^{(t)}) - \xi(\eta_t, L^{(t-1)}) + \frac{\eta_{t}}{2\ell_\psi} \|\ell^{(t)}\|_*^2
\]
It remains to show that $\eta \mapsto \xi(\eta, G)$ is decreasing. Taking the derivative with respect to $\eta$,
\al{
\partial_\eta \xi(\eta, L) 
&= \frac{1}{\eta^2} \psi^*(-\eta L) - \frac{1}{\eta} \braket{\nabla \psi^*(-\eta L)}{-L} \\
&= \frac{1}{\eta^2} \parenth{ \psi^*(-\eta L) + \braket{\nabla \psi^*(-\eta L)}{\eta L} } \\
&\leq \frac{1}{\eta^2} \psi^*(0) &\text{by convexity of $\psi^*$}\\
&= -\frac{1}{\eta^2} \inf_{x \in \Xcal} \psi(x) = 0
}
which proves inequality~\eqref{eq:DA_inst_regret_bound}. Summing, and using the bound on $\|\ell^{(t)}\|_*$, we have
\[
\sum_{\tau = 1}^t \braket{\ell^{(\tau)}}{x^{(\tau)}} 
\leq \xi(\eta_t, L^{(t)}) - \xi(\eta_0, L^{(0)}) + \frac{M^2}{2\ell_\psi} \sum_{\tau = 1}^t \eta_\tau
\]
By definition of $\xi$, we have
{\al{
&\xi(\eta_0, L^{(0)}) = \frac{1}{\eta_0} \psi^*(0) = -\frac{1}{\eta_0} \inf_{x \in \Xcal} \psi(x) = 0 \\
&\xi(\eta_t, L^{(t)}) = \frac{1}{\eta_t} \inf_{x \in \Xcal} \braket{\eta_tL^{(t)}}{x} + \psi(x) \leq \braket{L^{(t)}}{x} + \frac{1}{\eta_t}\psi(x)
}}%
Therefore
{\al{
\sum_{\tau = 1}^t \braket{\ell^{(\tau)}}{x^{(\tau)}} 
&\leq \braket{L^{(t)}}{x} + \frac{1}{\eta_t}\psi(x) + \frac{M^2}{2\ell_\psi} \sum_{\tau = 1}^t \eta_\tau
}}%
which proves the claim.
\end{proof}

Note that the regularizer $\psi$ can be unbounded on $S$. This is true for example for the entropy regularizer $\psi(x) = \int_{S} x(s) \ln(x(s)) \lambda(ds)$, which we will use in Section~\ref{sec:entropy}. Thus to obtain a useful (sublinear) bound on the regret, it may not suffice to take a supremum in the bound of Lemma~\ref{lemma:dual_averaging_regret}. This motivates the following Theorem. In what follows, we will assume that the loss functions are Lipschitz, uniformly in time. Let $s^\star_t \in \arg\min_{s \in S} L^{(t)}(s)$ (since the loss functions are continuous and $S$ is compact, the minimum is attained). Intuitively, if the losses are Lipschitz, then $\inf_{x \in \Xcal} \braket{L^{(t)}}{x}$ is well approximated by the cumulative loss of distributions which concentrate their mass around $s_t^\star$.

\begin{theorem}[Dual Averaging Regret for Lipschitz Losses]
\label{thm:dual_averaging}
Suppose that $\ell^{(t)}$ is ${L}$-Lipschitz, and $\|\ell^{(t)}\|_* \leq M$, uniformly in $t$. Then the dual averaging method with learning rates $(\eta_t)$ guarantees the following bound on the regret: For any positive sequence $(d_t)$,
\begin{align}
\frac{R^{(t)}}{t} \leq \frac{M^2}{2\ell_\psi} \frac{\sum_{\tau = 1}^t \eta_{\tau+1}}{t} + {L} d_t + \frac{1}{t \eta_{t+1}} \inf\nolimits_{x \in \Bcal_t} \psi(x)
\label{eq:dual_averaging:main_bnd}
\end{align}
where $\Bcal_t \subset \Xcal$ denotes the set of Lebesgue-continuous densities supported on $B(s^\star_t, d_t)$.
\end{theorem}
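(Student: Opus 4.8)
The plan is to obtain the theorem from Lemma~\ref{lemma:dual_averaging_regret} by taking the comparator $x$ in that bound to be a density concentrated near the minimizer $s^\star_t$ of the cumulative loss $L^{(t)}$, and then using the Lipschitz continuity of $L^{(t)}$ to pay only a controlled price for not comparing directly against a point mass at $s^\star_t$ (which is excluded from $\Xcal$, being non Lebesgue-continuous). This is exactly the intuition flagged before the statement: $\inf_{x\in\Xcal}\braket{L^{(t)}}{x}$ is well approximated by densities peaked at $s^\star_t$.

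First I would restate the conclusion of Lemma~\ref{lemma:dual_averaging_regret}, using $\sum_{\tau=1}^t\braket{\ell^{(\tau)}}{x}=\braket{L^{(t)}}{x}$, as
\[
\sum_{\tau=1}^t\braket{\ell^{(\tau)}}{x^{(\tau)}}\le \braket{L^{(t)}}{x}+\frac{1}{\eta_{t+1}}\psi(x)+\frac{M^2}{2\ell_\psi}\sum_{\tau=1}^t\eta_{\tau+1},
\]
valid for every fixed $x\in\Xcal$ (the step-size indices follow the update~\eqref{eq:dual_averaging_update}). Next I would lower-bound the benchmark in the definition of $R^{(t)}$: since every $x'\in\Xcal$ is a probability density, $\braket{L^{(t)}}{x'}=\Exp_{s\sim x'}[L^{(t)}(s)]\ge \min_{s\in S}L^{(t)}(s)=L^{(t)}(s^\star_t)$, hence $\inf_{x'\in\Xcal}\braket{L^{(t)}}{x'}\ge L^{(t)}(s^\star_t)$ and therefore $R^{(t)}\le \sum_{\tau=1}^t\braket{\ell^{(\tau)}}{x^{(\tau)}}-L^{(t)}(s^\star_t)$.

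The key step is to restrict the comparator to $x\in\Bcal_t$, i.e.\ densities supported on $B(s^\star_t,d_t)$. Because each $\ell^{(\tau)}$ is ${L}$-Lipschitz, $L^{(t)}$ is $t{L}$-Lipschitz, and since $\int_S x(s)\,\lambda(ds)=1$ we may subtract the constant $L^{(t)}(s^\star_t)$ to get
\[
\braket{L^{(t)}}{x}-L^{(t)}(s^\star_t)=\int_{B(s^\star_t,d_t)}\bigl(L^{(t)}(s)-L^{(t)}(s^\star_t)\bigr)\,x(s)\,\lambda(ds)\le t{L}\,d_t,
\]
uniformly over $x\in\Bcal_t$, since $\|s-s^\star_t\|\le d_t$ on the support. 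Combining the three displays gives, for every $x\in\Bcal_t$, $R^{(t)}\le t{L}\,d_t+\tfrac{1}{\eta_{t+1}}\psi(x)+\tfrac{M^2}{2\ell_\psi}\sum_{\tau=1}^t\eta_{\tau+1}$; as only the $\psi(x)$ term depends on $x$, I would take the infimum over $x\in\Bcal_t$ and divide by $t$ to recover~\eqref{eq:dual_averaging:main_bnd}.

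The chain of inequalities itself is routine; the real obstacle is that the bound is vacuous unless $\Bcal_t\neq\emptyset$ and $\inf_{x\in\Bcal_t}\psi(x)$ is finite and controllable. If $B(s^\star_t,d_t)\cap S$ carries little Lebesgue mass, any admissible density must be sharply peaked and $\psi(x)$ may blow up, so one cannot let $d_t\to0$ freely. Balancing the vanishing ${L}d_t$ term against the growing $\tfrac{1}{t\eta_{t+1}}\inf_{x\in\Bcal_t}\psi(x)$ term is precisely what is deferred to Section~\ref{sec:omega_potential}, where the uniformly-fat hypothesis on $S$ forces $\lambda\bigl(B(s^\star_t,d_t)\cap S\bigr)$ to be of order $d_t^{\,n}$ and thereby yields an explicit estimate of $\inf_{x\in\Bcal_t}\psi(x)$.
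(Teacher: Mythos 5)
Your proposal is correct and follows essentially the same route as the paper: lower-bound the benchmark by $L^{(t)}(s^\star_t)$, restrict the comparator in Lemma~\ref{lemma:dual_averaging_regret} to densities in $\Bcal_t$, pay $t{L}d_t$ via Lipschitz continuity, and take the infimum of $\psi$ over $\Bcal_t$ before dividing by $t$. Your closing remark that the bound is only useful when $\inf_{x\in\Bcal_t}\psi(x)$ is finite and controllable matches the paper's intent exactly --- that issue is deferred to the uniform-fatness analysis of Section~\ref{sec:omega_potential}, as you say.
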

\begin{proof}
First, we observe that
\[
R^{(t)} = \sum_{\tau = 1}^t \braket{\ell^{(\tau)}}{x^{(t)}} - \inf_{x \in \Xcal} \braket{x} \leq \sum_{\tau = 1}^t \braket{\ell^{(\tau)}}{x^{(\tau)}} - L^{(t)}(s^\star_t)
\]
Since the losses are ${L}$-Lipschitz, we have ${\forall x \in \Bcal_t}$,
\begin{align*}
\braket{\sum_{\tau = 1}^t \ell^{(\tau)}}{x} 
= \int_{B_t} \sum_{\tau = 1}^t \ell^{(\tau)}(s)x(s)\,ds 
\leq \int_{B_t} \sum_{\tau = 1}^t (\ell^{(\tau)}(s^\star_t) + {L}d_t)x(s)\,ds 
= L^{(t)}(s^\star_t) + t\, {L} d_t
\end{align*}
Thus, for all $x \in \Bcal_t$,
\al{
R^{(t)} 
&\leq \sum_{\tau = 1}^t \braket{\ell^{(\tau)}}{x^{(\tau)}} - L^{(t)}(s^\star_t) \\
&\leq \sum_{\tau = 1}^t \braket{\ell^{(\tau)}}{x^{(\tau)} - x} + t {L} d_t \\
&\leq \frac{1}{\eta_{t+1}} \psi(x) + \frac{M^2}{2\ell_\psi} \sum_{\tau = 1}^{t} \eta_{\tau+1} + t {L}d_t
}
where the last inequality uses Lemma~\ref{lemma:dual_averaging_regret}. We conclude by dividing by $t$ and taking the infimum over $x \in \Bcal_t$.
\end{proof}

We now have a general regret bound for the dual averaging method applied to Problem~\ref{prob:online}. In the next section, we further study the dual averaging algorithm with a particular family of regularizers, and we study their properties.

\section{Dual Averaging with $\omega$-potentials on Uniformly Fat Sets}
\label{sec:omega_potential}
We now study the dual averaging method when the regularizer is the $f$-divergence, or Csisz\'ar divergence~\cite{csiszar1967information} of a particular class of potential functions. This definition is a generalization of~\cite{audibert2014regret} to our infinite dimensional Hilbert setting.

\subsection{Csisz\'ar divergence induced by $\omega$-potentials}
\begin{definition}
\label{def:potential}
Let $\omega \leq 0$ and $a \in (-\infty, +\infty]$. An increasing $C^1$ diffeomorphism $\phi: (-\infty, a) \to (\omega, \infty)$, is an $\omega$-potential if
\al{
\lim_{u \to -\infty} \phi(u) = \omega && \lim_{u \to a} \phi(u) = +\infty && \int_{0}^1 \phi^{-1}(u)du < \infty
}
We associate to $\phi$ the function $f_\phi$, defined on $(0, \infty)$,
\[
f_\phi(x) = \int_{1}^x \phi^{-1}(u)du
\]
which is, by definition, convex (since $\phi^{-1}$ is increasing), and satisfies $f_\phi(1) = 0$\footnote{Note that in the original definition of Audibert at al., the function $f_\phi$ is taken to be the integral from $\omega$ to $\omega + x$. Our definition corresponds to a translation of $f_\phi$, in order to have $f_\phi(1) = 0$, which guarantees that $\psi_{f_\phi}(x) \geq 0$ on $\Xcal$.}. We also associate the $f$-divergence, defined on $\Xcal$ by
\[
\psi_{f_\phi}(x) = \int_{S} f_\phi(x(s) ) \lambda(ds)
\]
By convexity of $f$, we have for all $x \in \Xcal$, $\psi_{f_\phi}(x) \geq f_\phi \parenth{\int_S x(s) ds} = f_\phi(1) = 0$.
\end{definition}
\begin{figure}[h]
\centering
\includegraphics[width=.3\textwidth]{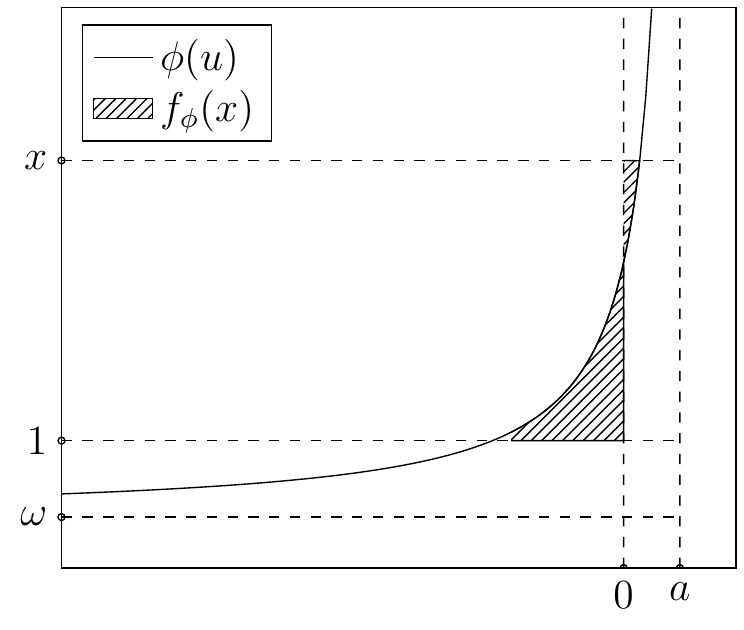}
\caption{Illustration of an $\omega$-potential.}
\end{figure}
\begin{example}[Euclidean projection]
Perhaps the simplest instance of $\omega$-potential is the identity $\phi(u) = u$, for which $\omega = -\infty$ and $a = +\infty$. In this case, $f_\phi(x) = \frac{x^2 - 1}{2}$, and the resulting Csisz\'ar divergence is $\psi_{f_\phi}(x) = \frac{1}{2}\int_S x(s)^2 \lambda(ds) - \frac{1}{2}= \frac{\|x\|^2_2 - 1}{2}$. The dual averaging method then projects, at each iteration, in the $L^2$ norm, the dual vector $-\eta_{t+1}\sum_{\tau = 1}^t \ell^{(t)}$ on the feasible set.
\end{example}

\begin{example}
\label{example:p-norm}
More generally, if $p > 1$, then taking $\phi(u) = \sign(u) |u|^{\frac{1}{p-1}}$ is an $\omega$-potential with $\omega = -\infty$ and $a = +\infty$. In this case, $\phi^{-1}(u) = \sign(u) |u|^{p-1}$ and $f_\phi(x) = \frac{|x|^p}{p}$, and the corresponding Csisz\'ar divergence is $\psi_{f_\phi}(x) = \frac{\|x\|_p^p - 1}{p}$.
\end{example}

\begin{example}[Entropy projection]
If we take $\phi(u) = e^{u-1}$, then the corresponding density function is $f_\phi(x) = \int_1^x \phi^{-1}(u)du = \int_{1}^x (1+\ln u)du = x \ln x$, and the associated $f_\phi$ divergence is the negative entropy
\al{
\psi_{f_\phi}(x) &= \int_{S} x(s) \ln x(s) \lambda(ds)
}
See Section~\ref{sec:entropy} for a generalization of the entropy divergence.
\end{example}

\subsection{Strong convexity}
In order for the bound of Theorem~\ref{thm:dual_averaging} to hold, we need the regularizer to be strongly convex. In this section, we give sufficient conditions on the potential for strong convexity of~$\psi_{f_\phi}$ with respect to $p$ norms, defined as follows for $p \geq 1$
\[
\|x\|_{p} = \parenth{\int_S |x(s)|^p \lambda(ds)}^{\frac{1}{p}}.
\]
\begin{theorem}
\label{thm:DA_strong_convexity}
Let $\phi$ be an $\omega$-potential, and suppose that there exists $\alpha > 0$, $z_0 \geq 0$ and $r \in (0, 1]$ such that $(\phi^{-1})'(z) \geq\frac{1}{ \alpha (z + z_0)^{r}}$ for all $z > 0$. Then $f_\phi$ is $\ell$-strongly convex w.r.t. the $p$ norm, with $\ell = \frac{1}{\alpha(1+z_0)^r}$ and $p = \frac{2}{1+r}$. That is, for all $x, y \in \Xcal$,
\[
D_{f_\phi}(x, y) \geq \frac{1}{2\alpha(1+z_0)^r} \|x - y\|_{\frac{2}{1+r}}^2
\]
\end{theorem}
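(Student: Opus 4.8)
The plan is to reduce the functional statement to a pointwise inequality in a single real variable and then lift it back to the $L^p$ norm via H\"older's inequality. Since $f_\phi' = \phi^{-1}$, the functional $\psi_{f_\phi}(x) = \int_S f_\phi(x(s))\lambda(ds)$ has gradient $\nabla\psi_{f_\phi}(y) = \phi^{-1}(y(\cdot))$ acting pointwise, so the Bregman divergence of the functional splits as an integral of scalar Bregman divergences,
\[
D_{\psi_{f_\phi}}(x,y) = \int_S \big(f_\phi(x(s)) - f_\phi(y(s)) - \phi^{-1}(y(s))(x(s)-y(s))\big)\,\lambda(ds).
\]
I would work instead with the equivalent (and cleaner) monotonicity formulation: it suffices to show $\langle\nabla\psi_{f_\phi}(x)-\nabla\psi_{f_\phi}(y),x-y\rangle \geq \ell\|x-y\|_p^2$, since integrating this along the segment from $y$ to $x$ yields $D_{\psi_{f_\phi}}(x,y)\geq\frac\ell2\|x-y\|_p^2$, which is exactly the claimed bound.

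First I would establish the scalar curvature estimate. For reals $a,b\geq0$, write $\phi^{-1}(a)-\phi^{-1}(b) = (a-b)\int_0^1 (\phi^{-1})'(b+t(a-b))\,dt$, so that
\[
(\phi^{-1}(a)-\phi^{-1}(b))(a-b) = (a-b)^2\int_0^1(\phi^{-1})'(b+t(a-b))\,dt \geq \frac{(a-b)^2}{\alpha}\int_0^1 \frac{dt}{(b+t(a-b)+z_0)^r},
\]
using the hypothesis $(\phi^{-1})'(z)\geq \frac{1}{\alpha(z+z_0)^r}$. Because $z\mapsto z^{-r}$ is convex for $r>0$, Jensen's inequality applied to the uniform measure on $[0,1]$ gives $\int_0^1 (b+t(a-b)+z_0)^{-r}dt \geq \big(\tfrac{a+b}{2}+z_0\big)^{-r}$, yielding the pointwise bound
\[
(\phi^{-1}(a)-\phi^{-1}(b))(a-b)\ \geq\ \frac{1}{\alpha}\,\frac{(a-b)^2}{\big(\tfrac{a+b}{2}+z_0\big)^{r}}.
\]

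Integrating over $S$ and writing $u=x-y$, $m=\tfrac{x+y}{2}+z_0$, this gives $\langle\nabla\psi_{f_\phi}(x)-\nabla\psi_{f_\phi}(y),x-y\rangle\geq \frac1\alpha\int_S \frac{u^2}{m^r}\,\lambda(ds)$. The final step is a H\"older argument converting this weighted $L^2$ quantity into $\|u\|_p^2$. Setting $p=\frac{2}{1+r}$ and applying H\"older with conjugate exponents $\frac2p$ and $\frac{2}{2-p}$ to the factorization $|u|^p = (u^2/m^r)^{p/2}\,m^{rp/2}$, the crucial bookkeeping is that $\frac{rp}{2-p}=1$ exactly, so the second factor is $\int_S m\,\lambda(ds)$ to the first power; raising to the power $\frac2p=1+r$ gives $\int_S\frac{u^2}{m^r}\geq \|u\|_p^2/(\int_S m)^{r}$. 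Finally, since $\int_S x=\int_S y=1$ and $\lambda(S)=1$, one has $\int_S m = 1+z_0$, so $\langle\nabla\psi_{f_\phi}(x)-\nabla\psi_{f_\phi}(y),x-y\rangle \geq \frac{1}{\alpha(1+z_0)^r}\|x-y\|_p^2$, which is the desired bound with $\ell=\frac{1}{\alpha(1+z_0)^r}$.

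The main obstacle is the H\"older step and the exact matching of exponents: it is what forces the specific choice $p=2/(1+r)$ (the only value making the weight $m$ appear to the first power so that its integral collapses to $1+z_0$), and it is where the location-dependent curvature $(z+z_0)^{-r}$ is traded for strong convexity in a weaker norm. A secondary technical point I would need to address is integrability/differentiability at the boundary where a density vanishes ($x(s)$ or $y(s)=0$): there the derivative bound is only assumed for $z>0$, but the integrals remain finite because $r\le1$ (and the $\omega$-potential condition $\int_0^1\phi^{-1}<\infty$ controls $f_\phi$ near $0$), so the pointwise estimate extends by continuity/monotone convergence. One should also recall that the Bregman divergence requires $y\in\mathring\Xcal$, as in the definition of strong convexity given above.
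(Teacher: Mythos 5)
Your proof is correct, and while it shares the paper's two core ingredients---a pointwise curvature estimate for $\phi^{-1}$ followed by a H\"older argument that trades the weight $(z+z_0)^{-r}$ for the $\|\cdot\|_{2/(1+r)}$ norm (your conjugate pair $\tfrac{2}{p},\tfrac{2}{2-p}$ is the same H\"older application as the paper's $p=1+r,\ q=1+\tfrac1r$)---it differs in execution at two points, and in both cases your route is the cleaner one. First, the paper bounds the Bregman divergence directly, invoking the Lagrange form of Taylor's theorem to produce a pointwise intermediate value $z(s)\in[x(s),y(s)]$; you instead prove strong monotonicity of the gradient and integrate along the segment, using the integral form of the increment plus Jensen's inequality (convexity of $z\mapsto z^{-r}$) to replace the unknown intermediate point by the explicit midpoint $m=\tfrac{x+y}{2}+z_0$. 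Second, this midpoint trick actually repairs a soft spot in the paper's own argument: the paper asserts $\|z+z_0\|_1 = 1+z_0$ on the grounds that ``$z \in [x,y] \subset \Xcal$,'' but the Taylor intermediate value $z(s)$ is $s$-dependent, not a fixed convex combination of the densities $x$ and $y$, so it need not have unit mass (one only gets $\int_S z \leq \int_S \max(x,y) \leq 2$, which would degrade the constant to $\tfrac{1}{2\alpha(2+z_0)^r}$). Your $m$ satisfies $\int_S m = 1+z_0$ exactly because $x,y\in\Xcal$ and $\lambda(S)=1$, so you obtain the stated constant $\tfrac{1}{2\alpha(1+z_0)^r}$ rigorously. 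Your closing caveats (the bound $(\phi^{-1})'(z)\geq \tfrac{1}{\alpha(z+z_0)^r}$ only for $z>0$, handled by continuity/monotone convergence since interior points of the segment $[b,a]$ are positive, and the restriction $y\in\mathring\Xcal$ inherited from the definition of $D_\psi$) are at the same level of rigor as the paper, which also relies on the segment identity $D_\psi(x,y)=\int_0^1\braket{\nabla\psi(y+t(x-y))-\nabla\psi(y)}{x-y}\,dt$ in its appendix.
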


\begin{proof}
By definition, the $f$-divergence associated to the potential $\phi$ is differentiable at any $x \in \Xcal$ with $x > 0$ a.e., and has gradient
\[
\nabla \psi_{f_\phi}(x)(s) = \phi^{-1}(x(s))
\]
Thus,
{\al{
D_{f_\phi}(x, y) &= \psi(x) - \psi(y) - \braket{\nabla \psi(y)}{x - y} \\
&= \int_S f_\phi(x(s)) - f_\phi(y(s)) - \phi^{-1}(y(s))(x(s) - y(s)) \lambda(ds) \\
&= \int_S \int_{y(s)}^{x(s)} \phi^{-1}(u)du - \phi^{-1}(y(s))(x(s) - y(s)) \lambda(ds)
}}%
and by a Taylor expansion of $\phi^{-1}$, there exists $z \in [x, y]$ 
such that
\al{
\int_{y(s)}^{x(s)} \phi^{-1}(u)du - \phi^{-1}(y(s))(x(s) - y(s)) 
&= (\phi^{-1})'(z(s))\frac{(x(s) - y(s))^2}{2} \\
&\geq \frac{1}{2}\frac{(x(s) - y(s))^2}{\alpha (z(s)+z_0)^{r}} \hspace{2cm} &\text{by assumption on $\phi$}
}
Now by the Cauchy-Schwartz inequality, if $p, q \geq 1$ are conjugate, i.e. $\frac{1}{p} + \frac{1}{q} = 1$, then for any $(a, b) \in \Hcal^* \times \Hcal$, $\braket{a}{b} \leq \|a\|_p \|b\|_q$, and it follows that whenever $\|b\|_q > 0$, $\|a\|_p^p \geq \parenth{\frac{\braket{a}{b}}{\|b\|_q}}^{p}$. Applying this inequality with $a = \parenth{\frac{(x - y)^2}{\alpha (z + z_0)^{r}}}^{\frac{1}{p}}$, $b = \parenth{\alpha (z+z_0)^{r}}^{\frac{1}{p}}$, we have
\[
\int_S \frac{(x(s) - y(s))^2}{\alpha (z(s)+z_0)^{r}} \lambda(ds) 
\geq \parenth{\frac{\int_S (x(s) - y(s))^{\frac{2}{p}} \lambda(ds) }{ \parenth{ \int_S \parenth{\alpha (z(s)+z_0)^{r}}^{\frac{q}{p}} \lambda(ds)  } ^{\frac{1}{q}} } }^p 
= \frac{\|x - y\|_{\frac{2}{p}}^2}{\alpha \|z + z_0\|_{\frac{r q}{p}} ^{r}}
\]
In particular, if we take $p = 1+r, q = 1+ \frac{1}{r}$, then $\frac{r q}{p} = 1$, and $\|z+z_0\|_{\frac{r q}{p}} = \|z+z_0\|_1 = 1 + z_0$ since $z \in [x, y] \subset \Xcal$ and $z_0 \geq 0$. Therefore
\al{
D_{f_\phi}(x, y) 
\geq \frac{1}{2} \int_S \frac{(x(s) - y(s))^2}{\alpha (z(s)+z_0)^{r}} \lambda(ds) 
\geq \frac{1}{2} \frac{\|x - y\|_{\frac{2}{1+r}}^2}{\alpha(1+z_0)^r}
}
which concludes the proof.
\end{proof}

As a consequence of Theorem~\ref{thm:DA_strong_convexity}, we can show that Begman divergences of Example~\ref{example:p-norm}, $\psi_{f} = \frac{\|\cdot\|_p^p - 1}{p}$, $p \in (1, 2]$, are strongly convex w.r.t. $\|\cdot\|_p$.
\begin{corollary}
\label{corollary:p-norm-strong-convexity}
Let $p \in (1, 2)$, and consider the $\omega$-potential $\phi(u) = \sign(u) |u|^{\frac{1}{p-1}}$, and its corresponding Csisz\'ar divergence $\psi(x) = \frac{\|x\|_p^p - 1}{p}$. Then $\psi$ is $(p-1)$-strongly convex w.r.t. $\|\cdot\|_{\frac{2}{3-p}}$.
\end{corollary}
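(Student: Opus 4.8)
The plan is to apply Theorem~\ref{thm:DA_strong_convexity} directly, with a careful choice of the parameters $\alpha$, $z_0$, and $r$. First I would recall from Example~\ref{example:p-norm} that $\phi(u) = \sign(u)|u|^{\frac{1}{p-1}}$ is indeed an $\omega$-potential (with $\omega = -\infty$ and $a = +\infty$), so the hypotheses the theorem places on $\phi$ itself are already in place; in particular the integrability condition holds, since $\int_0^1 \phi^{-1}(u)\,du = \int_0^1 u^{p-1}\,du = \frac{1}{p} < \infty$, and the inverse is $\phi^{-1}(z) = \sign(z)|z|^{p-1}$.

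Next I would compute the derivative of $\phi^{-1}$ on the positive axis: for $z > 0$ we have $\phi^{-1}(z) = z^{p-1}$, hence $(\phi^{-1})'(z) = (p-1)z^{p-2} = \frac{p-1}{z^{2-p}}$. The goal is to match this against the lower bound $(\phi^{-1})'(z) \geq \frac{1}{\alpha(z+z_0)^{r}}$ required by the theorem. Choosing $z_0 = 0$, $r = 2-p$, and $\alpha = \frac{1}{p-1}$ makes the required bound hold with \emph{equality} for every $z > 0$. Since $p \in (1,2)$, we have $r = 2-p \in (0,1) \subset (0,1]$, so this is an admissible choice of exponent.

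Finally I would read off the conclusion of Theorem~\ref{thm:DA_strong_convexity}: the strong-convexity constant is $\ell = \frac{1}{\alpha(1+z_0)^{r}} = p-1$, and the norm exponent is $\frac{2}{1+r} = \frac{2}{1+(2-p)} = \frac{2}{3-p}$. Substituting into the theorem's inequality $D_{\psi_{f_\phi}}(x,y) \geq \frac{\ell}{2}\|x-y\|_{\frac{2}{1+r}}^2$ yields exactly that $\psi = \psi_{f_\phi}$ is $(p-1)$-strongly convex with respect to $\|\cdot\|_{\frac{2}{3-p}}$.

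There is essentially no hard step here: the argument is a direct instantiation of the preceding theorem. The only points requiring care are verifying that the chosen exponent $r = 2-p$ lands in the admissible range $(0,1]$, and that the derivative bound holds for all $z > 0$ (here it holds with equality, so no slack is lost and the constant $p-1$ is sharp for this application). A minor technical caveat, already implicit in Example~\ref{example:p-norm}, is that $\phi$ has vanishing derivative at the origin, so the $\omega$-potential / diffeomorphism requirement should be interpreted in the mild sense used there rather than as a strict nonvanishing-derivative condition.
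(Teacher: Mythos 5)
Your proof is correct and is essentially identical to the paper's own argument: both compute $(\phi^{-1})'(z) = \frac{p-1}{z^{2-p}}$ for $z>0$ and instantiate Theorem~\ref{thm:DA_strong_convexity} with $r = 2-p$, $\alpha = \frac{1}{p-1}$, $z_0 = 0$, yielding the constant $p-1$ and the norm $\|\cdot\|_{\frac{2}{3-p}}$. The extra checks you include (the admissibility of $r \in (0,1)$ and the integrability of $\phi^{-1}$) are fine but not needed beyond what the paper already establishes in Example~\ref{example:p-norm}.
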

\begin{proof}
We have $\phi^{-1}(u) = \sign(u)|u|^{p-1}$, thus for $z > 0$,
\al{
(\phi^{-1})'(z) = \frac{p-1}{z^{2-p}}
}
where $2 - p \in (0, 1)$ by assumption on $p$. Thus we can apply Theorem~\ref{thm:DA_strong_convexity} with $r = 2-p$, $\alpha = \frac{1}{p-1}$ and $z_0 = 0$, which proves the claim.
\end{proof}

Note that the corollary also holds for $p=2$, since one can explicitly compute the Bregman divergence: we have $\psi(x) = \frac{\|x\|^2_2 - 1}{2}$, thus
\[
D_\psi(x, y) = \psi(x) - \psi(y) - \braket{\nabla \psi(y)}{x - y} = \frac{1}{2} (\|x\|_2^2 - \|y\|^2_2 - 2\braket{y}{x - y}) = \frac{1}{2} \|x - y\|_2^2
\]
which is $1$-strongly convex w.r.t. $\|\cdot\|_2$. Finally, we observe that a similar result is proved, in the finite-dimensional case, in~\cite{bental2001oredered}, Lemma 8.1: for $p \in (1, 2]$, $\frac{1}{2}\|\cdot\|_p^2$ is $(p-1)$-strongly convex w.r.t. $\|\cdot\|_p$. Note that the result concerns $\|\cdot\|_p^2$ while Corollary~\ref{corollary:p-norm-strong-convexity} concerns $\|\cdot\|_p^p$. The squared $p$-norm is not a Csisz\'ar divergence induced by an $\omega$-potential in general (except for $p = 2$). Using $\|\cdot\|_p^p$ as a regularizer instead of $\|\cdot\|_p^2$ allows us to benefit from the properties of $\omega$-potentials; in particular, the solution of the dual averaging iteration can be computed efficiently, as discussed in Section~\ref{sec:solution}.

Next, we give another sufficient condition for strong convexity w.r.t. $\|\cdot\|_1$.
\begin{proposition}
Let $\phi$ be a $\omega$-potential, and assume that $\phi$ is a $C^2$-diffeomorphism. Consider the potential density $f_\phi$ as in Definition~\ref{def:potential}. Let $w = 1 +\frac{1}{3} \frac{f_\phi'''(1)}{f_\phi''(1)}$. If $f_\phi$ satisfies one of the following conditions $\forall u > 0$:
{\begin{align*}
&(f_\phi(u) - f_\phi'(1)(u-1))\parenth{1 + (1-w)(u-1)} \geq \frac{f_\phi''(1)}{2}(u-1)^2\\
&sgn(u-1)\parenth{\frac{f_\phi'''(u)}{f_\phi''(u)}[1+(1-w)(u-1)] + 3(1-w)} \geq 0,
\end{align*}
}%
then the $f_\phi$-divergence is strongly convex with respect to the total variation norm. More precisely, for all $x, y \in \Xcal$,
\[
D_{f_\phi}(x, y) \geq\frac{1}{8\phi'(\phi^{-1}(1))} \|x - y\|_{1}^2
\]
\end{proposition}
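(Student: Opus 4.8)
The plan is to establish $D_{f_\phi}(x,y)\ge \frac{f_\phi''(1)}{8}\|x-y\|_1^2$, having first noted the identity $\phi'(\phi^{-1}(1))=1/f_\phi''(1)$: since $f_\phi'=\phi^{-1}$ we have $f_\phi''=(\phi^{-1})'=1/\big(\phi'\circ\phi^{-1}\big)$, so the stated constant $\tfrac{1}{8\phi'(\phi^{-1}(1))}$ is exactly $\tfrac{f_\phi''(1)}{8}$. Because the regularizer is separable, the Bregman divergence takes the pointwise form $D_{f_\phi}(x,y)=\int_S d(x(s),y(s))\,\lambda(ds)$ with scalar kernel $d(a,b)=f_\phi(a)-f_\phi(b)-f_\phi'(b)(a-b)\ge 0$, so the task is to turn a one–dimensional estimate on $d$ into an $L^1$ bound using the mass constraints $\int_S x=\int_S y=1$.

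I would first dispose of the easy implication, namely that the second displayed condition implies the first. Set $g(u)=\big(f_\phi(u)-f_\phi'(1)(u-1)\big)\big(1+(1-w)(u-1)\big)-\tfrac{f_\phi''(1)}{2}(u-1)^2$, so that the first condition is precisely $g\ge 0$ on $(0,\infty)$. A direct differentiation gives $g(1)=g'(1)=g''(1)=0$ and $g'''(u)=f_\phi''(u)\big(\tfrac{f_\phi'''(u)}{f_\phi''(u)}[1+(1-w)(u-1)]+3(1-w)\big)$; the choice $w=1+\tfrac13 f_\phi'''(1)/f_\phi''(1)$ is exactly what forces $g'''(1)=0$, i.e. it cancels the cubic Taylor coefficient. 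The second condition asserts $\mathrm{sgn}(u-1)\,g'''(u)/f_\phi''(u)\ge 0$, and since $f_\phi''>0$ this means $g'''$ carries the sign of $u-1$; integrating three times from $u=1$ and invoking the vanishing of $g,g',g''$ there yields $g\ge 0$, which is the first condition.

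The substantive part is to convert the scalar condition into the $L^1$ strong-convexity bound. I would split $S=S^+\sqcup S^-$ with $S^+=\{x>y\}$, $S^-=\{x\le y\}$; since $\int_S(x-y)=0$, the transported mass is balanced, $\int_{S^+}(x-y)=\int_{S^-}(y-x)=\tfrac12\|x-y\|_1$, and this balance is the source of the factor $4$ (hence the $\tfrac18$ rather than $\tfrac12$). On each cell the aim is to replace $x,y$ by their cell averages through a convexity/coarsening inequality, reducing $\int_{S^\pm}d(x,y)$ to $\lambda(S^\pm)\,d(\bar x^\pm,\bar y^\pm)$, where the averaged deviations satisfy $\lambda(S^\pm)\,|\bar x^\pm-\bar y^\pm|=\tfrac12\|x-y\|_1$ and $\lambda(S^\pm)\le 1$. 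The argument then reduces to a two–term scalar inequality tying together the opposite-sign deviations on the two cells, and this is exactly what the first condition delivers: the $w$-calibrated tilt $1+(1-w)(u-1)$ is the device that lets the two contributions combine to beat $\tfrac{f_\phi''(1)}{2}\cdot\tfrac14\|x-y\|_1^2$.

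The hard part, which I expect to be the genuine obstacle, is justifying this reduction at arbitrary base points. One cannot bound $d(a,b)$ pointwise by $\tfrac{f_\phi''(1)}{2}(a-b)^2$ (that would force pointwise $L^2$ strong convexity, which is false once $f_\phi''(b)$ decays as $b\to\infty$, e.g. for the entropy), so no naive Cauchy–Schwarz or quadratic surrogate can work; moreover the kernel $d$ is not jointly convex — its Hessian has determinant $f_\phi''(a)\big[f_\phi''(b)-f_\phi'''(b)(a-b)\big]-f_\phi''(b)^2$, which is not sign-definite — so the coarsening step is not a free application of Jensen. Controlling the base point $b=y(s)$, i.e. showing that the reference-point inequality of the first condition, together with the mass constraint that pins the average of $y$ at $1$, upgrades to a valid estimate at every base point, is precisely where the two conditions and the calibration of $w$ must carry the weight, and where the estimate has to be carried out with care.
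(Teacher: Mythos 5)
Your preliminary steps are sound: the identity $\phi'(\phi^{-1}(1))=1/f_\phi''(1)$ is correct, and your verification that the second displayed condition implies the first (setting $g(u)=(f_\phi(u)-f_\phi'(1)(u-1))(1+(1-w)(u-1))-\tfrac{f_\phi''(1)}{2}(u-1)^2$, checking $g(1)=g'(1)=g''(1)=0$, noting that the choice of $w$ forces $g'''(1)=0$, and integrating the sign condition on $g'''$ three times) is a correct argument. But the proof stops exactly where the theorem begins: you never derive $D_{f_\phi}(x,y)\geq \tfrac{f_\phi''(1)}{8}\|x-y\|_1^2$ from the first condition. Your plan --- split $S$ into $\{x>y\}$ and $\{x\leq y\}$, coarsen to cell averages, then invoke a two-point inequality --- hinges on the coarsening step $\int_{S^\pm} d(x(s),y(s))\,\lambda(ds)\geq \lambda(S^\pm)\,d(\bar x^\pm,\bar y^\pm)$, and, as you yourself observe, this is not an application of Jensen because the Bregman kernel $d(a,b)=f_\phi(a)-f_\phi(b)-f_\phi'(b)(a-b)$ is not jointly convex. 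Declaring that this is ``the genuine obstacle'' and that the estimate ``has to be carried out with care'' is a diagnosis, not a proof; as written, the central implication of the proposition is unproved, so the proposal has a genuine gap.

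For comparison, the paper does not prove this implication from scratch either: it invokes the generalized Pinsker inequality of Gilardoni~\cite{gilardoni2010pinsker} (Theorem~3 and Corollary~4 there), whose hypotheses are exactly the two displayed conditions and whose conclusion is $D_f\geq \tfrac{f''(1)}{2}D_{TV}^2$ with $D_{TV}=\tfrac12\|x-y\|_1$. It is worth seeing why the obstacle you hit disappears in that setting: Gilardoni's theorem concerns the Csisz\'ar form $\int y\,f(x/y)\,d\lambda$, whose integrand is the perspective $(a,b)\mapsto b\,f(a/b)$ of $f$, and the perspective \emph{is} jointly convex; hence the reduction to two-point (binary) distributions --- your coarsening step --- is precisely the data-processing inequality and comes for free, after which the $w$-calibrated scalar condition is applied only to binary distributions. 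Your difficulty is therefore structural, not technical: Bregman divergences of separable functionals do not satisfy data processing in general, while Csisz\'ar divergences do. Had you recast the divergence in Csisz\'ar form before coarsening, your outline could be completed along Gilardoni's lines. Note, finally, that this same mismatch is silently crossed by the paper's one-line proof: the proposition's $D_{f_\phi}$ is the Bregman divergence of $\psi_{f_\phi}$, whereas Gilardoni's bound is for the Csisz\'ar divergence, and the two differ in general (for $f(u)=(u^2-1)/2$ they are $\tfrac12\int(x-y)^2$ versus $\tfrac12\int(x-y)^2/y$); so the obstacle you identified is real and is not actually resolved by the paper's citation alone.
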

\begin{proof}
By Definition~\ref{def:potential}, if $\phi$ is a $C^2$ diffeomorphism, then $f_\phi$ is three times differentiable on $(0, \infty)$, and for all $x > 0$, $f''(x) = (\phi^{-1})'(x) = \frac{1}{\phi'(\phi^{-1}(x))}$, which is, by assumption on $\phi$, strictly positive. Thus by the generalized Pinsker inequality in~\cite{gilardoni2010pinsker} (Theorem~3 and Corollary~4) the associated $f$-divergence satisfies $D_{f_\phi}(x, y) \geq\frac{f''(1)}{2} D_{TV}(x, y)^2$, where $D_{TV}$ is the total variation norm, $D_{TV}(x, y) = \frac{1}{2}\|x - y\|_1$. This concludes the proof.
\end{proof}

\subsection{Solution of the Bregman projection with $\omega$-potentials}
\label{sec:solution}
We now characterize the solution of the dual averaging update, given by the Bregman projection in equation~\eqref{eq:dual_averaging_update}.

\begin{proposition}
\label{prop:DA_solution}
Let $\phi$ be an $\omega$-potential. Let $L^{(t)} \in E^*$, and consider the dual averaging iteration~\eqref{eq:dual_averaging_update} in Algorithm~\ref{alg:dual_averaging}, with the regularizer $\psi$ taken to be the $f_\phi$ divergence associated to $\phi$. Then the solution $x^{(t+1)}$ is given by
\begin{align*}
x^{(t+1)}(s) = \phi(-\eta_{t+1}(L^{(t)}(s) + \nu^\star))_+
\end{align*}
where $y_+$ denotes the positive part of $y$, and $\nu^\star$ satisfies $\int_S \phi(-\eta_{t+1}(L^{(t)}(s) + \nu^\star))_+ \lambda(ds) = 1$.
\end{proposition}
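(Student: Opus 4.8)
The plan is to treat the update~\eqref{eq:dual_averaging_update} as a convex program and characterize its minimizer through the Karush--Kuhn--Tucker (KKT) conditions, exploiting the fact that the objective $F(x) = \braket{L^{(t)}}{x} + \frac{1}{\eta_{t+1}}\psi_{f_\phi}(x)$ is separable across $s$. Existence and uniqueness of the minimizer over the closed convex set $\Xcal$ follow from the strong convexity of $\psi_{f_\phi}$ (Theorem~\ref{thm:DA_strong_convexity}), exactly as in the discussion preceding Proposition~\ref{prop:smoothness_duality}. I would relax only the linear equality constraint $\int_S x\,\lambda(ds) = 1$, keeping the nonnegativity constraint $x \ge 0$ explicit, and introduce a scalar multiplier $\nu \in \Rbb$, forming the partial Lagrangian
\[
\Lcal(x,\nu) = \braket{L^{(t)} + \nu \bone}{x} + \tfrac{1}{\eta_{t+1}} \psi_{f_\phi}(x) - \nu,
\]
where $\bone$ denotes the constant density equal to $1$.

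Because $\psi_{f_\phi}(x) = \int_S f_\phi(x(s))\,\lambda(ds)$ is an integral of a pointwise term, minimizing $\Lcal(\cdot,\nu)$ over $\{x \ge 0\}$ decouples into the pointwise problems
\[
x_\nu(s) = \argmin_{u \ge 0} \; \bigl(L^{(t)}(s) + \nu\bigr) u + \tfrac{1}{\eta_{t+1}} f_\phi(u).
\]
I would then solve this one-dimensional convex problem. Since $f_\phi'(u) = \phi^{-1}(u)$ on $(0,\infty)$, the pointwise objective has derivative $\bigl(L^{(t)}(s)+\nu\bigr) + \tfrac{1}{\eta_{t+1}}\phi^{-1}(u)$; setting it to zero and inverting the increasing map $\phi$ yields the interior stationary point $u = \phi\bigl(-\eta_{t+1}(L^{(t)}(s)+\nu)\bigr)$. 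When this value is nonpositive --- equivalently when the right derivative at $0$ is already nonnegative --- the constrained minimizer is $u = 0$. Combining the two cases gives exactly $x_\nu(s) = \phi\bigl(-\eta_{t+1}(L^{(t)}(s)+\nu)\bigr)_+$, which is the claimed form with $\nu$ in place of $\nu^\star$.

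It then remains to choose $\nu^\star$ so that $x_{\nu^\star}$ is feasible, i.e. $\int_S \phi\bigl(-\eta_{t+1}(L^{(t)}(s)+\nu^\star)\bigr)_+\lambda(ds) = 1$. I would define $N(\nu)$ to be this integral and argue that, since $\phi$ is increasing and $\eta_{t+1}>0$, $N$ is non-increasing and continuous in $\nu$ (by monotone/dominated convergence, using that $L^{(t)}$ is continuous on the compact set $S$, hence bounded, so the argument of $\phi$ stays bounded); the limits $N(\nu)\to 0$ as $\nu \to +\infty$ and $N(\nu)\to +\infty$ as $\nu$ decreases, together with the intermediate value theorem, produce a $\nu^\star$ with $N(\nu^\star)=1$. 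Finally, because $F$ is convex, $\Xcal$ admits a strictly interior point (the uniform density $\bone$), and the pair $(x_{\nu^\star},\nu^\star)$ satisfies stationarity, primal feasibility, and complementary slackness for the nonnegativity constraint by construction, the KKT conditions are sufficient for global optimality, so $x^{(t+1)} = x_{\nu^\star}$.

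The main obstacle I anticipate is the measure-theoretic justification of the pointwise decoupling in the infinite-dimensional setting --- interchanging the infimum and the integral and selecting a measurable minimizer --- together with the careful treatment of the boundary case $x_\nu(s) = 0$: the right derivative $f_\phi'(0^+) = \lim_{u\to 0^+}\phi^{-1}(u)$ equals $\phi^{-1}(0)$ when $\omega < 0$ but diverges to $-\infty$ when $\omega = 0$, so the subdifferential inequality at the boundary must be verified in both regimes. The integrability condition $\int_0^1 \phi^{-1}(u)\,du < \infty$ from Definition~\ref{def:potential} is what guarantees that $f_\phi(0)$ is finite, so that the optimal density --- which vanishes on $\{s : \phi(-\eta_{t+1}(L^{(t)}(s)+\nu^\star)) \le 0\}$ --- still has finite regularizer value, and I would invoke it precisely at this step.
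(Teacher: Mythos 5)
Your proposal is correct in substance and arrives at the same formula, but it executes the Lagrangian idea in a genuinely different way from the paper. The paper keeps the nonnegativity constraint inside the objective as the indicator $i_K$ of the cone $K=\{x \ge 0\}$, writes the stationarity inclusion $0 \in L^{(t)} + \frac{1}{\eta_{t+1}}\phi^{-1}(x^\star) + N_K(x^\star) + \nu\mathbf{1}$, explicitly characterizes the normal cone ($g \le 0$ and $gx = 0$ a.e.), and splits $S$ into the support of $x^\star$ and its complement via complementary slackness; this is a \emph{necessity} argument (any optimal primal--dual pair has the claimed form), with existence of such a pair delegated to \cite{bauschke2011convex}. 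You instead relax only the equality constraint, minimize the partial Lagrangian pointwise over $u \ge 0$ (a scalar convex problem whose solution is exactly $\phi(-\eta_{t+1}(L^{(t)}(s)+\nu))_+$), produce $\nu^\star$ by the intermediate value theorem, and conclude by \emph{sufficiency}: a feasible point that minimizes the Lagrangian over $K$ is optimal. This buys two things the paper's proof does not give: an actual existence argument for $\nu^\star$, and avoidance of normal-cone calculus. Note also that the measurable-selection/interchange issue you flag as the main obstacle evaporates in your direction of the argument: you only need that $x_{\nu^\star}$ minimizes the integrand at a.e.\ $s$, and integrating the pointwise inequality $(L^{(t)}(s)+\nu^\star)\,x_{\nu^\star}(s) + \tfrac{1}{\eta_{t+1}}f_\phi(x_{\nu^\star}(s)) \le (L^{(t)}(s)+\nu^\star)\,x(s) + \tfrac{1}{\eta_{t+1}}f_\phi(x(s))$ over $S$ gives $\Lcal(x_{\nu^\star},\nu^\star) \le \Lcal(x,\nu^\star)$ for every $x \in K$; an interchange theorem would only be needed for the converse (necessity) direction. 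Likewise, the Slater-type interior-point remark is superfluous: sufficiency of KKT for convex problems needs no constraint qualification.

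Two caveats, neither fatal. First, you invoke Theorem~\ref{thm:DA_strong_convexity} for existence and uniqueness of the minimizer, but that theorem requires the extra growth hypothesis $(\phi^{-1})'(z) \ge \frac{1}{\alpha(z+z_0)^r}$, which a general $\omega$-potential need not satisfy; fortunately your sufficiency argument never uses this (and uniqueness, if desired, follows from strict convexity of $f_\phi$, since $\phi^{-1}$ is strictly increasing). Second, your intermediate-value argument needs more care than stated: it uses boundedness of $L^{(t)}$ (continuity on compact $S$), while the proposition only assumes $L^{(t)} \in E^*$, and when $a < \infty$ the function $N(\nu) = \int_S \phi(-\eta_{t+1}(L^{(t)}(s)+\nu))_+\,\lambda(ds)$ is only defined (finite) for $\nu$ large enough that $-\eta_{t+1}(L^{(t)}(s)+\nu) < a$ a.e., so the claimed limit $N(\nu)\to\infty$ as $\nu$ decreases requires the convention $\phi \equiv +\infty$ on $[a,\infty)$ and a monotone-convergence argument on the correct half-line. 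In the paper's intended setting ($L^{(t)}$ a sum of Lipschitz losses) this is harmless, and the paper's own proof is no more careful on this point --- it never establishes existence of $\nu^\star$ at all --- so these are refinements to carry out, not gaps in your approach relative to the paper.
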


\begin{proof}
Let $K$ be the cone $K = \{x \in L^2(S) : x \geq 0 \ a.e.\}$, and let
\[
f(x) = \braket{L^{(t)}}{x} + \frac{1}{\eta_{t+1}} \psi(x) + i_K(x)
\]
where $i_K$ is the indicator function of the cone $K$, i.e. $i_K(s) = 0$ if $s \in K$ and $+\infty$ otherwise. The dual averaging iteration is equivalent to the following problem:
\[
\begin{aligned}
&\text{minimize}_{x \in L^2(S)} && f(x)\\
&\text{subject to} && \braket{\mathbf 1}{x} = 1
\end{aligned}
\]
where $ 1: S \to \Rbb$ is identically equal to $1$. Using the fact that the subdifferential of the indicator $i_K$ is the normal cone $N_K$ (See for example Chapter 16 in~\cite{bauschke2011convex}) given by $\forall x \in K$
\[
\partial i_K(x) = N_K(x) = \Bigl\{ g \in L^2(S) : \sup_{y \in K} \braket{g}{y - x} \leq 0 \Bigr\},
\]
the subdifferential of the objective function is
\[
\partial f(x) = L^{(t)} + \frac{1}{\eta_{t+1}} \phi^{-1}(x) + N_K(x)
\]
First, we show that, for all $x$ and all $g \in N_K(x)$, $g \leq 0$ and $g x = 0$ almost everywhere. Indeed, fixing $x\in K$, we have $\braket{g}{y - x} \leq 0$ for all $y \in K$. In particular, $x + 1_{g > 0} \in K$, thus
\al{
0 \geq \braket{g}{1_{g>0}} = \int_{S} g_+(s)\lambda(ds)
}
which proves that $g_+ = 0$ a.e.. Furthermore, taking $y = \frac{1}{2}x \in K$, we have
\al{
0 \geq \braket{g}{y - x} 
= -\frac{1}{2} \braket{g}{x} = \frac{1}{2} \int_S |g(s)| x(s) ds
}
which implies that $|g|x = 0$ a.e., which proves the claim.

Now, consider the Lagrangian $\Lcal: E \times \Rbb \to \Rbb$
\[
\Lcal(x, \nu) = \braket{L^{(t)}}{x} + \frac{1}{\eta_{t+1}} \psi(x) + i_K(x) + \nu (\braket{\mathbf  1}{x} - 1)
\]
Then  $(x^\star, \nu^\star)$ is an optimal pair only if
\al{
&0 \in L^{(t)} + \frac{1}{\eta_{t+1}} \phi^{-1}(x^\star) + N_K(x^\star) + \nu \mathbf 1\\
&\braket{\mathbf 1}{x^\star} = 1
}
see for example Section 19.3 in~\cite{bauschke2011convex}. We can rewrite the stationarity condition as follows:
\begin{align*}
\exists\, g^\star \in N_K(x^\star) \text{ such that } L^{(t)} + \frac{1}{\eta_{t+1}} \phi^{-1}(x^\star) + \nu \mathbf 1 + g^\star = 0.
\end{align*}
Therefore,
\al{
&x^\star(s) = \phi(-\eta_{t+1} (L^{(t)}(s)+\nu^\star + g^\star(s)) \text{ a.e.}\\
&g^\star \in N_K(x^\star) \\
&\braket{\mathbf 1}{x^\star} = 1
}
In particular, let $\support(x^\star) = \{s : x^\star(s) > 0\}$ (a measurable set). By the complementary slackness condition, $g^\star x^\star = 0$ a.e., therefore $g^\star = 0$ a.e. on $\support(x^\star)$. And for a.e. $s \notin \support(x^\star)$, we have
{\al{
\phi(-\eta_{t+1}(L^{(t)}(s) + \nu^\star)
\leq \phi(-\eta_{t+1}(L^{(t)}(s) + \nu^\star + g^\star(s)))
= 0
}}%
since $\phi$ is increasing and $g^\star \leq 0$ a.e.. Therefore the optimality conditions become
\al{
&x^\star(s) = \phi(-\eta_{t+1} (L^{(t)}(s)+\nu^\star))_+ \\
&\int_S \phi(-\eta_{t+1} (L^{(t)}(s)+\nu^\star))_+ds = 1
}
which proves the claim.
\end{proof}
Proposition~\ref{prop:DA_solution} shows that the solution of the Bregman projection~\ref{eq:dual_averaging_update} is entirely determined by the dual variable $\nu^\star$, therefore computing the solution reduces to computing the optimal $\nu^\star$. Furthermore, we observe that the function $\nu \mapsto \int_S \phi(-\eta_{t+1}(L^{(t)}(s) + \nu^\star))_+ \lambda(ds)$ is increasing, by assumption on $\phi$, therefore one can compute $\nu^\star$ (to arbitrary precision) using a simple bisection method. Note that in general, the solution $x^{(t+1)}$ may not be supported everywhere on $S$, unless $\omega = 0$, in which case $\phi$ is by definition, strictly positive.
\subsection{Regret analysis}
Next, we show that under the appropriate assumptions on the feasible set $S$, and the asymptotic behavior of the $\omega$-potential, it is possible to achieve sublinear regret with dual averaging. First, we focus our attention on sets which are uniformly fat~(using the definition of \cite{krichene2015hedge}), a generalization of convexity.
\begin{definition}
Consider a subset $S \subset \Rbb^n$. $S$ is said to be $v$-uniformly fat if there exists $v > 0$ such that for all $s \in S$, there exists a convex $K_s \subset S$ such that $s \in K_s$ and ${\lambda(K_s) \geq v}$.
\end{definition}
In particular, if $S$ is convex, it is $1$-uniformly fat. 
\begin{figure}[H]
\centering
\includegraphics[width=.18\textwidth]{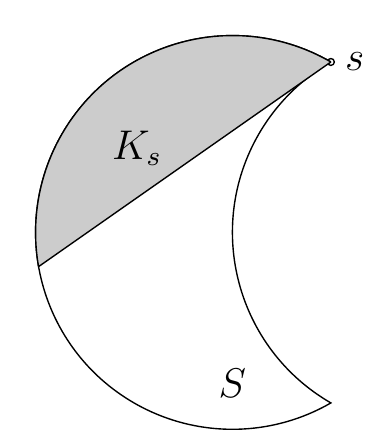} \hspace{1in}
\includegraphics[width=.18\textwidth, page=2]{TikZ/fatness}
\caption{Illustration of uniform fatness (left) and the homothetic transformation used in the proof of Proposition~\ref{prop:f-div-regret} (right).}
\label{fig:fatness}
\end{figure}

Intuitively, the uniform fatness condition guarantees that there is sufficient volume around any point of $S$, so that the solution $x^{(t)}$ of the Bregman projection assigns enough probability mass around the optimum. In particular, uniform fatness excludes isolated points.

The next Proposition gives a regret bound on uniformly fat sets. For a subset $C \subset \Rbb^n$, we denote the diameter of $C$ by $D(C) = \sup_{s, s' \in C} \|s - s'\|$.
\begin{proposition}
\label{prop:f-div-regret}
Suppose that $S$ is $v$-uniformly fat. Let $\phi$ be an $\omega$-potential, and take the regularizer to be the Csisz\'ar divergence  $f_\phi$. Suppose that $f_\phi$ is $\ell_\psi$-strongly convex w.r.t. $\|\cdot \|$, and that $\|\ell^{(t)}\|_*$ is bounded by $M$. Let $(d_t)$ be a sequence of positive numbers. Then for all $t$,
\begin{align}
\frac{R^{(t)}}{t} \leq \frac{M^2}{2\ell_\psi} \frac{\sum_{\tau = 1}^t \eta_{\tau+1}}{t} + {L} D(S) d_t + \frac{1}{t \eta_{t+1}} vd_t^nf_\phi\parenth{\frac{1}{vd_t^n}}
\end{align}
\end{proposition}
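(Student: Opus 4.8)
The plan is to reduce everything to Theorem~\ref{thm:dual_averaging} and then spend all the work on bounding the infimum of the regularizer over densities concentrated near the optimum $s^\star_t$. Concretely, I would first invoke Theorem~\ref{thm:dual_averaging} applied to the rescaled sequence $(D(S)\,d_t)_t$ in place of $(d_t)_t$; this immediately yields
\[
\frac{R^{(t)}}{t} \le \frac{M^2}{2\ell_\psi}\frac{\sum_{\tau=1}^t \eta_{\tau+1}}{t} + {L}\,D(S)\,d_t + \frac{1}{t\,\eta_{t+1}}\inf_{x\in\Bcal'_t}\psi_{f_\phi}(x),
\]
where $\Bcal'_t$ is the set of densities supported on $B(s^\star_t, D(S)\,d_t)$. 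The first two terms already match the statement, so the whole proposition reduces to exhibiting a single density $x^\star \in \Bcal'_t$ with $\psi_{f_\phi}(x^\star) \le v\,d_t^n\,f_\phi\bigl(\tfrac{1}{v d_t^n}\bigr)$.

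This is exactly where uniform fatness enters, via the homothety depicted in Figure~\ref{fig:fatness}. Since $S$ is $v$-uniformly fat, I would pick a convex set $K := K_{s^\star_t}\subseteq S$ with $s^\star_t \in K$ and $\lambda(K)\ge v$. Working in the only regime of interest, where $d_t$ is small enough that $v d_t^n \le \lambda(K)$ (e.g. $d_t \le 1$, since $\lambda(K)\ge v$), I would scale $K$ about $s^\star_t$ by the ratio $\rho := \bigl(v d_t^n/\lambda(K)\bigr)^{1/n}= d_t\bigl(v/\lambda(K)\bigr)^{1/n}\le d_t \le 1$, obtaining the convex set $\tilde K := s^\star_t + \rho\,(K - s^\star_t)$. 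Because $K$ is convex, contains $s^\star_t$, and $\rho \le 1$, the homothety maps $K$ into itself, so $\tilde K \subseteq K \subseteq S$; moreover $\lambda(\tilde K) = \rho^n\lambda(K) = v d_t^n$ and $D(\tilde K) = \rho\,D(K) \le d_t\,D(S)$, and since $s^\star_t \in \tilde K$ we get $\tilde K \subseteq B(s^\star_t, D(S)\,d_t)$.

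Finally I would take $x^\star$ to be the uniform density $x^\star = \tfrac{1}{v d_t^n}\,\mathbf 1_{\tilde K}$, which lies in $\Xcal$ and in $\Bcal'_t$. A direct computation gives
\[
\psi_{f_\phi}(x^\star) = \int_{\tilde K} f_\phi\Bigl(\tfrac{1}{v d_t^n}\Bigr)\lambda(ds) = \lambda(\tilde K)\,f_\phi\Bigl(\tfrac{1}{v d_t^n}\Bigr) = v\,d_t^n\,f_\phi\Bigl(\tfrac{1}{v d_t^n}\Bigr),
\]
and substituting into the infimum above closes the proof.

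I expect the only genuine obstacle to be the geometric step: one must use the convex body guaranteed by fatness rather than a ball, so that the homothety is guaranteed to stay inside $S$, which is precisely the role of the uniform fatness hypothesis (a ball of radius $d_t$ around $s^\star_t$ need not be contained in a non-convex $S$). The one point that requires care is calibrating $\rho$ so that $\tilde K$ has volume \emph{exactly} $v d_t^n$: using the full scaled body would produce a bound of the form $\lambda f_\phi(1/\lambda)$ with $\lambda \ge v d_t^n$, which matches the claimed quantity only when $\lambda\mapsto \lambda f_\phi(1/\lambda)$ is monotone (it is for the $p$-norm and entropy potentials, but not in general), so exact calibration is what makes the argument potential-agnostic. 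The construction is confined to $d_t$ with $v d_t^n \le \lambda(K_{s^\star_t})$, which is harmless since the bound is only ever applied with $d_t \to 0$.
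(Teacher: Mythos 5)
Your proposal is correct and structurally mirrors the paper's proof: apply Theorem~\ref{thm:dual_averaging} with radius $D(S)d_t$, use fatness to get a convex body $K \ni s^\star_t$ with $\lambda(K)\ge v$, shrink $K$ homothetically about $s^\star_t$, and bound the infimum of the regularizer by its value at the uniform density on the shrunken body. Where you genuinely differ is the last step, and your variant is the more careful one. The paper shrinks by the ratio $d_t$, so its body $S_t$ only satisfies $\lambda(S_t) \ge v d_t^n$, and it must then pass from $\lambda(S_t)\, f_\phi\bigl(1/\lambda(S_t)\bigr)$ to $v d_t^n f_\phi\bigl(1/(v d_t^n)\bigr)$; that comparison needs the map $\lambda \mapsto \lambda f_\phi(1/\lambda)$ to be non-increasing, which the paper justifies only by the remark that $f_\phi$ is increasing on $[1,\infty)$ --- insufficient as stated, since in the comparison the first factor grows while the second shrinks (and the paper's final displayed inequality even points in the wrong direction for what the proof needs). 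Your exact calibration $\rho = \bigl(v d_t^n/\lambda(K)\bigr)^{1/n}$ gives $\lambda(\tilde K) = v d_t^n$ exactly, so $\psi_{f_\phi}(x^\star)$ equals the claimed quantity and no monotonicity property is invoked; as you observe, that monotonicity can genuinely fail for $\omega$-potentials with $\phi(0) > 1$, so your calibration is what makes the argument valid for the full class in the statement. Two conventions you share with the paper (hence not gaps of your attempt relative to it): the divergence of a compactly supported density is computed by integrating only over its support, tacitly treating $f_\phi(0) = -\int_0^1 \phi^{-1}(u)\,du$ as zero, and the homothety argument requires $v d_t^n \le \lambda(K)$, a restriction absent from the proposition but harmless since the bound is used with $d_t \to 0$.
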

\begin{proof}
Let $s^\star_t \in \argmin_{s \in S} L^{(t)}(s)$. Since $S$ is $v$-uniformly fat, there exists  a convex subset $K_t \subset S$, containing $s^\star_t$, such that $\lambda (K_t) \geq v$. Following the argument in~\cite{krichene2015hedge} and~\cite{hazan2007logarithmic}, consider the homothetic transformation of $K_t$, given by
\[
S_t = \{d_t(s - s^\star_t), s \in S\}
\]
The diameter and mass of $S_t$ satisfy
\al{
&D(S_t) = d_t D(K_t) \leq d_t D(S) \\
&\lambda(S_t) = d_t^n \lambda(K_t) \geq v d_t^n
}
Let $u_{S_t}$ be the uniform distribution over $S_t$. Since $u_{S_t}$ has support in $B(s^\star_t, d_tD(S))$, we have, by Theorem~\ref{thm:dual_averaging},
\[
\frac{R^{(t)}}{t} \leq \frac{M^2}{2\ell_\psi} \frac{\sum_{\tau = 1}^t \eta_{\tau+1}}{t} + {L} D(S) d_t + \frac{1}{t \eta_{t+1}} \psi(u_{S_t})
\]
where, by definition of the $f_\phi$ divergence,
\al{
\psi(u_{S_t}) 
= \int_{S_t} f_\phi(1 / \lambda(S_t)) \lambda(ds) = f_\phi(1 / \lambda(S_t)) \lambda(S_t)
}
To conclude, we observe that $f_\phi$ is increasing on $[1, \infty)$. Indeed, by definition of an $\omega$-potential, $f_\phi(x)$ has derivative $\phi^{-1}(x)$, and since $\phi^{-1}$ is increasing, for all $x \geq 1$, $\phi^{-1}(x) \geq \frac{1}{1-\omega}\int_{\omega}^1 \phi^{-1}(u)du = 0$. Therefore
\[
\psi(u_{S_t}) = f_\phi(1 / \lambda(S_t)) \lambda(S_t) \geq vd_t^nf_\phi\parenth{\frac{1}{vd_t^n}}
\]
which concludes the proof. \end{proof}

Next, we show that if $f_\phi$ has a bounded asymptotic growth rate, the regret grows sublinearly.
\begin{theorem}
\label{thm:f-div-sublinear-regret}
Suppose that $S$ is $v$-uniformly fat. Let $\phi$ be an $\omega$-potential such that $f_\phi$ is $\ell_\psi$-strongly convex with respect to $\|\cdot\|$, and suppose that $\ell^{(t)}$ is $L$-Lipschitz and $\|\ell^{(t)}\|_* \leq M$ uniformly in $t$. Suppose that there exists $\epsilon > 0$ and $C > 0$ such that
\[
f_\phi(x) \leq C x^{1+\epsilon}
\]
for all $x \geq 1$. Then the dual-averaging method with $f_\phi$-divergence and learning rates $\eta_t = \theta t^{-\alpha}$ satisfies the following bound on the per-round regret
\al{
\frac{R^{(t)}}{t} 
&\leq \frac{M^2 \theta}{2\ell_\psi(1-\alpha)} t^{-\alpha} + (LD(S) + Cv^\epsilon) t^{-\frac{1-\alpha}{1+n\epsilon}} \\
&= \Ocal\parenth{t^{-\alpha} + t^{-\frac{1-\alpha}{1+n\epsilon}}}
}
The rate is optimal for $\alpha = \frac{1}{2+n\epsilon}$ for which
\[
\frac{R^{(t)}}{t} = \Ocal\parenth{t^{-\frac{1}{2+n\epsilon}}}
\]
\end{theorem}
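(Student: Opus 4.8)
The plan is to treat the final theorem as a calibration of the two free parameters in Proposition~\ref{prop:f-div-regret}: the learning rate $\eta_t$ (fixed to $\theta t^{-\alpha}$ by hypothesis) and the auxiliary sequence $(d_t)$, which I will couple to the learning rate so that the three terms of that bound decay at matching rates. I would begin by invoking Proposition~\ref{prop:f-div-regret}, whose hypotheses ($S$ is $v$-uniformly fat, $f_\phi$ is $\ell_\psi$-strongly convex, $\|\ell^{(t)}\|_* \leq M$, and $\ell^{(t)}$ is $L$-Lipschitz) are exactly those assumed here, to obtain for every positive sequence $(d_t)$
\[
\frac{R^{(t)}}{t}\leq \frac{M^2}{2\ell_\psi}\frac{\sum_{\tau=1}^t\eta_{\tau+1}}{t} + LD(S)\,d_t + \frac{1}{t\eta_{t+1}}\,vd_t^n\,f_\phi\parenth{\frac{1}{vd_t^n}}.
\]

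First I would control the leading term. Since $\alpha \in (0,1)$ and the sequence $\eta$ is decreasing, $\sum_{\tau=1}^t \eta_{\tau+1} \leq \theta\sum_{\tau=1}^t \tau^{-\alpha} \leq \theta\int_0^t u^{-\alpha}\,du = \frac{\theta}{1-\alpha}\,t^{1-\alpha}$, so the first term is at most $\frac{M^2\theta}{2\ell_\psi(1-\alpha)}\,t^{-\alpha}$, which is precisely the first summand of the claimed bound. Next I would bring in the growth hypothesis $f_\phi(x) \leq Cx^{1+\epsilon}$. For $t$ large enough that $vd_t^n \leq 1$ (so that the argument $\tfrac{1}{vd_t^n} \geq 1$ lies where the hypothesis is assumed to hold), this gives $vd_t^n f_\phi\parenth{\tfrac{1}{vd_t^n}} \leq Cv^{-\epsilon}d_t^{-n\epsilon}$, so the third term is bounded by $\frac{Cv^{-\epsilon}}{t\eta_{t+1}}\,d_t^{-n\epsilon}$.

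At this stage the second term grows in $d_t$ while the third decays in $d_t$, so I would choose $d_t$ to equalize their orders. Taking $d_t = t^{-(1-\alpha)/(1+n\epsilon)}$ makes the second term exactly $LD(S)\,t^{-(1-\alpha)/(1+n\epsilon)}$, and a short computation using $t\eta_{t+1} \sim \theta\, t^{1-\alpha}$ shows the third term is of the same order $t^{-(1-\alpha)/(1+n\epsilon)}$; collecting the two gives a combined constant of the stated form $LD(S)+Cv^{\epsilon}$ (the learning-rate factor $\theta$ and the volume factor are absorbed into the normalization of $d_t$). This produces the intermediate estimate $\frac{R^{(t)}}{t} = \Ocal\parenth{t^{-\alpha} + t^{-(1-\alpha)/(1+n\epsilon)}}$.

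Finally I would optimize the exponent over $\alpha$. The slowest-decaying of the two terms is minimized when their exponents coincide, i.e.\ $\alpha = \frac{1-\alpha}{1+n\epsilon}$, equivalently $\alpha(2+n\epsilon)=1$, so that $\alpha = \frac{1}{2+n\epsilon}$ and both terms collapse to the rate $\Ocal\parenth{t^{-1/(2+n\epsilon)}}$. The routine parts are the integral estimate for the first term and the exponent algebra; the one point needing genuine care is the coupling $d_t \propto t^{-(1-\alpha)/(1+n\epsilon)}$ together with the verification that $vd_t^n \leq 1$ eventually, which is what legitimizes applying the polynomial growth bound on $f_\phi$ only in its assumed range. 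The main conceptual step is recognizing $(d_t)$ as a tuning knob inherited from Proposition~\ref{prop:f-div-regret} that must be tied to the learning rate so that the Lipschitz "discretization" term $LD(S)d_t$ and the regularization term vanish at a common rate.
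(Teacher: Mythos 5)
Your proposal is correct and follows essentially the same route as the paper's own proof: invoke Proposition~\ref{prop:f-div-regret}, bound $\sum_{\tau}\eta_\tau$ by integral comparison to get the $\frac{M^2\theta}{2\ell_\psi(1-\alpha)}t^{-\alpha}$ term, apply the growth hypothesis on $f_\phi$ to bound $vd_t^n f_\phi\bigl(\tfrac{1}{vd_t^n}\bigr)$ by $Cv^{-\epsilon}d_t^{-n\epsilon}$, couple $d_t = t^{-(1-\alpha)/(1+n\epsilon)}$ to balance the discretization and regularization terms, and finally equate exponents to get $\alpha = \frac{1}{2+n\epsilon}$. Your additional care in verifying that $\tfrac{1}{vd_t^n}\geq 1$ before invoking the polynomial growth bound (which is only assumed for $x \geq 1$) is a detail the paper glosses over; otherwise the two arguments coincide, down to the same mild sloppiness about the exact constant (the paper itself wavers between $Cv^{\epsilon}$ and $Cv^{-\epsilon}$ and drops the $\theta$ factor in the third term).
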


\begin{proof}
Let $(d_t)$ be a decreasing sequence which converges to $0$. By Proposition~\ref{prop:f-div-regret}, the regret of the dual averaging method is bounded by
\[
\frac{R^{(t)}}{t} \leq \frac{M^2}{2\ell_\psi} \frac{\sum_{\tau = 1}^t \eta_\tau}{t} + {L} D(S) d_t + \frac{1}{t\eta_t} vd_t^nf_\phi\parenth{\frac{1}{vd_t^n}}
\]
where we can bound $\sum_{\tau = 1}^t \eta_\tau \leq \theta \sum_{\tau = 1}^t \int_{\tau - 1}^\tau u^{-\alpha}du = \theta \int_0^t u^{-\alpha}du = \frac{\theta}{1-\alpha} t^{1-\alpha}$. By assumption on $f_\phi$, we have
\[
vd_t^n f_\phi\parenth{\frac{1}{vd_t^n}} \leq C v^{-\epsilon} d_t^{-n\epsilon}
\]
Combining these bounds, we have
\al{
\frac{R^{(t)}}{t} 
&\leq \frac{M^2\theta}{2\ell_\psi(1-\alpha)} t^{-\alpha} + LD(S)d_t + C v^\epsilon t^{\alpha-1} d_t^{-n\epsilon} \\
&= \Ocal\parenth{t^{-\alpha} + d_t + t^{\alpha - 1} d_t^{-n\epsilon}}
}
Taking $d_t = t^{-\frac{1-\alpha}{1+n\epsilon}}$, the bound becomes $\frac{R^{(t)}}{t} \leq \frac{M^2 \theta}{2\ell_\psi(1-\alpha)} t^{-\alpha} + (LD(S) + Cv^{-\epsilon}) t^{-\frac{1-\alpha}{1+n\epsilon}} $, which proves the claim.
\end{proof}

One can formulate similar regret bounds under different assumptions on the asymptotic behavior of $f_\phi$. For example, one can show the following extension, proved in the Appendix.
\begin{customthm}{\ref{thm:f-div-sublinear-regret}.1}
Under the assumptions of Theorem~\ref{thm:f-div-sublinear-regret}, suppose that there exists $\epsilon > 0$, $\nu \geq 0$ such that
\[
f_\phi(x) = \Ocal\parenth{ x^{1+\epsilon} (\ln x)^\nu }
\]
and the learning rates are taken to be $\eta_t = \Theta\parenth{(\ln t)^{\frac{\nu}{2}}t^{-\alpha}}$, $\alpha = \frac{1}{2+n\epsilon}$ then
\[
\frac{R^{(t)}}{t} = \Ocal\parenth{ (\ln t)^{\frac{\nu}{2}}t^{-\frac{1}{2+n\epsilon}} }.
\]
\end{customthm}

To conclude this Section, we observe that while the regret is defined with respect to elements of $\Xcal$ (Definition~\ref{def:regret}), it is equivalent, for uniformly fat sets, to the regret with respect to elements of $S$, in the following sense:
\begin{align}
R^{(t)} 
&= \sum_{\tau = 1}^t \braket{\ell^{(\tau)}}{x^{(\tau)}} - \inf_{x \in \Xcal}  \braket{ L^{(t)}}{x} \notag \\
&= \sum_{\tau = 1}^t \braket{\ell^{(\tau)}}{x^{(\tau)}} - \min_{s \in S} L^{(t)}(s)
\label{eq:cumreg:dual_rewritten}
\end{align}

\begin{proof}
Let $s^\star_t \in \argmin_{s \in S} L^{(t)}(s)$. Then it suffices to show that for all $\epsilon > 0$, there exists $x \in \Xcal$ such that
\[
\braket{\sum_{\tau = 1}^t \ell^{(\tau)}}{x} \leq \sum_{\tau = 1}^t \ell^{(\tau)}(s^\star_t)+ \epsilon
\]
Fix $\epsilon > 0$. Since $S$ is $v$-uniformly fat, there exists a convex set $K_t \subset S$ containing $s^\star_t$, with $\lambda(K_t) > 0$. Let $\bar S_t$ be the homothetic transform of $K_t$, of center $s^\star_t$ and ratio $d_t$, as in the proof of Proposition~\ref{prop:f-div-regret}. Then we have
\al{
&D(\bar S_t) = d_t D(K_t) \leq d_t D(S) \\
&\lambda(\bar S_t) = d_t^n \lambda(K_t) > 0
}
Now consider  $x = \frac{1}{\lambda(\bar S_t)} 1_{\bar S_t}$. We have $x \in \Xcal$, and since the $\ell^{(\tau)}$ are uniformly ${L}$-Lipschitz,
\al{
\braket{\sum_{\tau = 1}^t \ell^{(\tau)}}{x} 
&= \sum_{\tau = 1}^t \int_{\bar S_t} \frac{1}{\lambda(\bar S_t)}\ell^{(\tau)}(s) ds\\
&\leq \sum_{\tau = 1}^t \int_{\bar S_t} \frac{1}{\lambda(\bar S_t)}(\ell^{(\tau)}(s^\star_t) + {L} d_t D(S)) ds  \\
&= t{L} d_t D(S) + \sum_{\tau = 1}^t \ell^{(\tau)}(s^\star_t)
}
In particular, if we choose $d_t = \frac{\epsilon}{t {L} D(S)}$, we have $\braket{\sum_{\tau = 1}^t \ell^{(\tau)}}{x} \leq  \sum_{\tau = 1}^t \ell^{(\tau)}(s^\star_t) + \epsilon$, which proves the claim.
\end{proof}

\section{Entropy dual averaging}
\label{sec:entropy}
Let $\omega \leq 0$, and consider the $\omega$-potential
\[\begin{aligned}
\phi:(-\infty, \infty) &\to (\omega, \infty) \\
u &\mapsto \phi(u) = e^{u-1} + \omega
\end{aligned}
\]
Its inverse is $\phi^{-1}(u) = 1+\ln(u - \omega)$. In particular, we have $\int_{0}^1 \phi^{-1}(u) du = (u-\omega) \ln (u-\omega) |_0^1 < \infty$. The resulting density function is simply
\[
f_\phi(x) = \int_1^x \phi^{-1}(u)du = (x-\omega) \ln (x-\omega) - (1-\omega)\ln(1-\omega)
\]
and the associated $f_\phi$ divergence is the following generalized negative entropy
\al{
\psi_{f_\phi}(x) &= \int_{S} (x(s)-\omega) \ln (x(s)-\omega) \lambda(ds) - (1-\omega)\ln(1-\omega) \\
&=H(x-\bomega) - H(\bone-\bomega)
}
where $H(x) = \int_S x(s)\ln x(s) \lambda(ds)$. By applying Proposition~\ref{prop:DA_solution}, we can derive the explicit solution of the dual averaging update~\eqref{eq:dual_averaging_update}.

\begin{figure}[h]
\centering
\includegraphics[page=2,width=.3\textwidth]{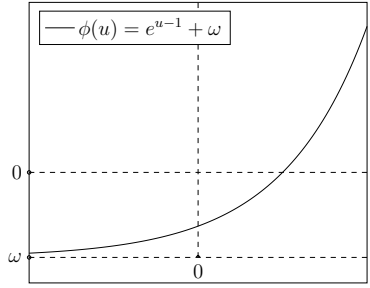}
\caption{Illustration of the entropy $\omega$-potential $\phi(u) = e^{u-1} + \omega$.}
\label{fig:potentials}
\end{figure}

\begin{corollary}
Consider the dual averaging method on the set of distributions $\Xcal$, regularized by the negative entropy. The solution of update~\eqref{eq:dual_averaging_update} is given by
\[
x^{(t+1)}(s) = \parenth{\frac{e^{-\eta_{t+1} L^{(t)}(s)}}{Z^{(t)}} + \omega}_+
\]
where $Z^{(t)}$ is the appropriate normalization constant. In particular, when $\omega = 0$, $x^{(t+1)}(s) = \frac{e^{-\eta_{t+1} L^{(t)}(s)}}{Z^{(t)}}$ and $Z^{(t)} =\int_S e^{-\eta_{t+1} L^{(t)}(s)} \lambda(ds)$, we recover the Hedge algorithm~\cite{hazan2007logarithmic}.
\end{corollary}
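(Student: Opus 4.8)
The plan is to obtain the corollary as a direct specialization of Proposition~\ref{prop:DA_solution} to the entropy potential $\phi(u) = e^{u-1} + \omega$. The groundwork is already laid in the surrounding text, which exhibits the inverse $\phi^{-1}(u) = 1 + \ln(u-\omega)$, checks the integrability condition $\int_0^1 \phi^{-1}(u)\,du < \infty$, and confirms that $\phi$ is an increasing $C^1$ diffeomorphism from $(-\infty,\infty)$ onto $(\omega,\infty)$ with $\lim_{u\to-\infty}\phi(u)=\omega$ and $\lim_{u\to\infty}\phi(u)=+\infty$. Thus $\phi$ is a bona fide $\omega$-potential, and Proposition~\ref{prop:DA_solution} applies with regularizer $\psi = \psi_{f_\phi}$, giving $x^{(t+1)}(s) = \phi\parenth{-\eta_{t+1}\parenth{L^{(t)}(s) + \nu^\star}}_+$, where $\nu^\star$ is fixed by the normalization $\braket{\mathbf 1}{x^{(t+1)}} = 1$.

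The one computational step is to expand $\phi$ at the argument supplied by Proposition~\ref{prop:DA_solution}. Writing $w = -\eta_{t+1}\parenth{L^{(t)}(s) + \nu^\star}$, I would compute
\[
\phi(w) = e^{w-1} + \omega = e^{-\eta_{t+1} L^{(t)}(s)}\, e^{-\eta_{t+1}\nu^\star - 1} + \omega,
\]
which factors the exponential into a part depending on $s$ through $L^{(t)}(s)$ and a constant depending only on the multiplier $\nu^\star$. Defining the normalization constant $Z^{(t)} = e^{\eta_{t+1}\nu^\star + 1}$ (equivalently $1/Z^{(t)} = e^{-\eta_{t+1}\nu^\star - 1}$) turns this into $\frac{e^{-\eta_{t+1} L^{(t)}(s)}}{Z^{(t)}} + \omega$, so that $x^{(t+1)}(s) = \parenth{\frac{e^{-\eta_{t+1} L^{(t)}(s)}}{Z^{(t)}} + \omega}_+$, with $Z^{(t)}$ determined implicitly by the constraint $\int_S x^{(t+1)}(s)\,\lambda(ds) = 1$. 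This is exactly the claimed form.

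For the special case $\omega = 0$, the potential $\phi(u) = e^{u-1}$ is strictly positive on all of its domain, so the truncation $(\cdot)_+$ is vacuous and the density is supported on all of $S$. The normalization condition then reads $\int_S \frac{e^{-\eta_{t+1} L^{(t)}(s)}}{Z^{(t)}}\,\lambda(ds) = 1$, which solves explicitly to $Z^{(t)} = \int_S e^{-\eta_{t+1} L^{(t)}(s)}\,\lambda(ds)$, recovering the exponential-weights (Hedge) update. There is no genuine obstacle here; the only point requiring care is the bookkeeping that identifies the abstract Lagrange multiplier $\nu^\star$ of Proposition~\ref{prop:DA_solution} with the log-normalization constant $Z^{(t)}$, and the observation that the positive part in the general formula becomes redundant precisely when $\omega = 0$ because the potential is then everywhere positive by definition.
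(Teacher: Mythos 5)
Your proposal is correct and takes exactly the paper's route: the paper obtains this corollary by directly specializing Proposition~\ref{prop:DA_solution} to the entropy potential $\phi(u) = e^{u-1}+\omega$, which is precisely your argument. Your explicit bookkeeping (identifying $Z^{(t)} = e^{\eta_{t+1}\nu^\star + 1}$, and noting that the positive part is vacuous when $\omega = 0$ since $\phi$ is then strictly positive) simply fills in the routine computation the paper leaves implicit.
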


By definition of the entropy potential function, we have $(\phi^{-1})'(z) = \frac{1}{z - \omega}$, therefore, the assumptions of Theorem~\ref{thm:DA_strong_convexity} hold with $\alpha = 1$, $z_0 = -\omega$, $r = 1$, thus the negative entropy is $\frac{1}{1-\omega}$-strongly convex with respect to $\|\cdot\|_1$, and we can apply Proposition~\ref{prop:f-div-regret} and Theorem~\ref{thm:f-div-sublinear-regret} to obtain a regret bound.
\begin{corollary}
Suppose that $S$ is $v-$uniformly fat, and that the loss functions are $L$-Lipschitz, uniformly in time, and that $\|\ell^{(t)}\|_\infty \leq M$. Then the dual averaging method on $\Xcal$, regularized with the negative entropy, with learning rates $\eta_{t} = \Theta\parenth{\sqrt{ \frac{\ln t}{t}}}$, has a sublinear regret such that $\frac{R^{(t)}}{t} = \Ocal\parenth{\sqrt{ \frac{\ln t}{t}}}$.
\end{corollary}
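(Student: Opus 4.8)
The plan is to specialize the general regret bound of Proposition~\ref{prop:f-div-regret} to the entropy potential and then balance the three resulting terms through a careful choice of $(d_t)$ and $(\eta_t)$. First I would record the data needed to invoke that proposition. The preceding computation (via Theorem~\ref{thm:DA_strong_convexity} with $\alpha = 1$, $z_0 = -\omega$, $r = 1$) shows that the negative entropy is $\frac{1}{1-\omega}$-strongly convex with respect to $\|\cdot\|_1$, so I take $\ell_\psi = \frac{1}{1-\omega}$ and reference norm $\|\cdot\|_1$; its dual is $\|\cdot\|_\infty$, so the hypothesis $\|\ell^{(t)}\|_\infty \leq M$ is exactly the bound $\|\ell^{(t)}\|_* \leq M$ required. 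Proposition~\ref{prop:f-div-regret} then yields, for any positive sequence $(d_t)$,
\[
\frac{R^{(t)}}{t} \leq \frac{M^2(1-\omega)}{2}\frac{\sum_{\tau=1}^t \eta_{\tau+1}}{t} + LD(S)d_t + \frac{1}{t\eta_{t+1}}\, v d_t^n f_\phi\!\parenth{\frac{1}{v d_t^n}}.
\]

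The crux is the asymptotic analysis of the last term, since the entropy is the borderline case. Using $f_\phi(x) = (x-\omega)\ln(x-\omega) - (1-\omega)\ln(1-\omega)$, writing $a = v d_t^n$ and letting $a \to 0$, I would show $a\, f_\phi(1/a) = (1-a\omega)\ln\parenth{\tfrac{1}{a}-\omega} - a(1-\omega)\ln(1-\omega) = \ln(1/a) + O\parenth{a\ln(1/a)}$, so that the regularizer contributes only $O\parenth{\ln(1/d_t)}$. This is the key qualitative fact: for the entropy the penalty grows merely logarithmically in $1/d_t$, whereas for a potential with $f_\phi(x) \leq Cx^{1+\epsilon}$ it grows polynomially, which is what forces the slower rate $t^{-1/(2+n\epsilon)}$ there.

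With this in hand the choice is essentially forced: take $d_t = t^{-1/2}$ and $\eta_t = \Theta\parenth{\sqrt{\ln t/t}}$. I would then estimate the three terms separately. Summation gives $\sum_{\tau=1}^t \eta_{\tau+1} = \Theta\parenth{\sqrt{t\ln t}}$, so the first term is $\Theta\parenth{\sqrt{\ln t/t}}$; the second is $LD(S)t^{-1/2} = O\parenth{\sqrt{\ln t/t}}$ for $t \geq e$; and since $\ln(1/d_t) = \tfrac12 \ln t$ while $1/(t\eta_{t+1}) = \Theta\parenth{1/\sqrt{t\ln t}}$, the third term is $\Theta\parenth{\ln t / \sqrt{t\ln t}} = \Theta\parenth{\sqrt{\ln t/t}}$. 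All three match, giving $R^{(t)}/t = \Ocal\parenth{\sqrt{\ln t/t}}$.

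The main obstacle I anticipate is precisely this balancing of the logarithmic factors in the boundary case $\epsilon = 0$, $\nu = 1$: neither Theorem~\ref{thm:f-div-sublinear-regret} (which requires $\epsilon > 0$ and a polynomial envelope) nor its logarithmic extension (also stated for $\epsilon > 0$) applies verbatim, so the estimate must be carried out directly from Proposition~\ref{prop:f-div-regret}. The delicate point is that the $\sqrt{\ln t}$ factor deliberately inserted into the learning rate is exactly what cancels the $\ln t$ coming from the regularizer term upon dividing by $\eta_{t+1}$; a naive rate of $\Theta(t^{-1/2})$ would instead leave a spurious $(\ln t)/\sqrt t$ term, strictly worse than the claimed bound.
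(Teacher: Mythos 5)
Your proof is correct, and at its core it is the same balancing argument the paper uses --- but you organize it differently, and your reorganization addresses a real gap in the paper's own one-line proof. The paper proves this corollary by noting $f_\phi(x) = (x-\omega)\ln(x-\omega) - (1-\omega)\ln(1-\omega) = \Ocal(x\ln x)$ and citing Theorem~\ref{thm:f-div-sublinear-regret}.1, which implicitly means instantiating that theorem at $\epsilon = 0$, $\nu = 1$; as you observe, that theorem (like Theorem~\ref{thm:f-div-sublinear-regret}) is stated only for $\epsilon > 0$, and any admissible choice with $\epsilon > 0$ yields only $\Ocal\bigl((\ln t)^{\nu/2} t^{-1/(2+n\epsilon)}\bigr)$, strictly weaker than the claimed $\Ocal\bigl(\sqrt{\ln t/t}\bigr)$. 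Your decision to redo the estimate directly from Proposition~\ref{prop:f-div-regret} is therefore the honest fix: your choice $d_t = t^{-1/2}$ is exactly the theorem's prescription $d_t = t^{-\frac{1-\alpha}{1+n\epsilon}}$ evaluated at $\epsilon = 0$, $\alpha = \frac{1}{2}$; your expansion $a\, f_\phi(1/a) = \ln(1/a) + \Ocal\bigl(a\ln(1/a)\bigr)$ plays the role of the envelope bound $d_t^{-n\epsilon}\bigl(\ln(1/d_t)\bigr)^\nu$; and your three-way balance with the $\sqrt{\ln t}$-corrected learning rate matches the proof of Theorem~\ref{thm:f-div-sublinear-regret}.1 term by term. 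It also parallels the explicit computation the paper carries out right after the corollary (which uses $d_t = 1/t$ and the constants $c_\omega$, $d_\omega$; any $d_t$ with $d_t = \Ocal\bigl(\sqrt{\ln t/t}\bigr)$ and $\ln(1/d_t) = \Ocal(\ln t)$ works equally well). In short: the paper's proof is shorter but formally applies Theorem~\ref{thm:f-div-sublinear-regret}.1 outside its stated hypotheses (its proof does extend verbatim to $\epsilon = 0$), whereas yours is self-contained and rigorous at the borderline case, at the cost of repeating the computation.
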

\begin{proof}
By definition of the negative entropy, we have $f_\phi(x) = (x-\omega) \ln (x-\omega) - (1-\omega)\ln(1-\omega) = \Ocal(x \ln x)$, and the results follows by Theorem~\ref{thm:f-div-sublinear-regret}.1.
\end{proof}

In fact, we can obtain a more explicit upper bound on the regret. By Proposition~\ref{prop:f-div-regret}, we have
\al{
\frac{R^{(t)}}{t} \leq \frac{(1-\omega)M^2}{2} \frac{\sum_{\tau = 1}^t \eta_{\tau+1}}{t} + {L} D(S) d_t + \frac{1}{t \eta_{t+1}} vd_t^nf_\phi\parenth{\frac{1}{vd_t^n}}
}
and we bound $f_\phi\parenth{\frac{1}{v d_t^n}}$. First, we have
\al{
f_\phi(x) 
&= (x-\omega) \ln(x-\omega) - (1-\omega)\ln(1-\omega) \\
&\leq (x-\omega) \ln(x-\omega) \\
&\leq c_\omega x \ln x + d_\omega \hspace{3cm} &\text{for $x \geq 1$}
}
where $c_\omega = 1+\ln(1-\omega)$ and $d_\omega = (1-\omega)\ln(1-\omega)$. To prove the last inequality, let $\delta(x) = (x-\omega) \ln(x-\omega) - c_\omega x \ln x + d_\omega$. Then $\delta(1) = 0$ and for all $x \geq 1$,
\al{
\delta'(x) &= \ln(x-\omega) + 1 - c_\omega(1+\ln x) = \ln \frac{x-\omega}{x^{c_\omega}(1-\omega)} 
\leq \ln \frac{x(1-\omega)}{x^{c_\omega}(1-\omega)} = \ln x^{1-c_\omega} \leq 0
}

Thus, whenever $d_t \leq 1$, we have $\frac{1}{vd_t^n} \geq 1$, and
\al{
vd_t^n f_\phi\parenth{\frac{1}{vd_t^n}}  
&\leq c_\omega \ln \frac{1}{vd_t^n} + d_\omega v d_t^n
}
and taking $d_t = 1/t$, we have
\[
\frac{R^{(t)}}{t} \leq \frac{(1-\omega)M^2}{2} \frac{\sum_{\tau = 1}^t \eta_{\tau+1}}{t} + \frac{{L}D(S)}{t} + \frac{c_\omega(n\ln t - \ln v) + d_\omega v / t^n}{t \, \eta_{t+1}}
\]
In particular, when $\omega = 0$ (i.e. for the Hedge algorithm), $c_\omega = 1$ and $d_\omega = 0$, and the bound simplifies to
\[
\frac{R^{(t)}}{t} \leq \frac{M^2}{2} \frac{\sum_{\tau = 1}^t \eta_{\tau+1}}{t} + \frac{{L}D(S)}{t} + \frac{n\ln t - \ln v}{t \, \eta_{t+1}}.
\]
and we recover the bound on the Hedge regret obtained in~\cite{krichene2015hedge}.

\section{Concluding Remarks}
\label{sec:conclusion}
We studied a sequential problem in which a decision maker chooses, at each iteration, a distribution $x^{(t)}$ over a compact set $S$, then observes a loss $\ell^{(t)}$ from the class of $L$-Lipschitz continuous functions on $S$. Viewing the problem as an online convex problem over $L^2(S)$, we applied the dual averaging method and derived a general regret bound. Then we studied dual averaging with Csisz\'ar divergences induced by $\omega$-potentials, and showed that for this class, the Bregman projection~\eqref{eq:dual_averaging_update} can be computed efficiently, assuming one can efficiently evaluate integrals on $S$ (e.g. by using a MCMC method). We then provided sufficient conditions on the asymptotic behavior of the potential to guarantee (i) strong convexity of the Csisz\'ar divergence (Theorem~\ref{thm:DA_strong_convexity}), and (ii) a sublinear regret (Theorem~\ref{thm:f-div-sublinear-regret}). These sufficient conditions provide guidance in the design and analysis of dual averaging methods, which we illustrated with one particular family of $\omega$-potentials, using entropy regularizers.

Another approach for learning on a continuum consists in applying a discrete learning algorithm on a finite cover of $S$. Since the loss functions are ${L}$-Lipschitz, the additional regret incurred due to learning on the cover is at most ${L}$ times the diameter of each element of the cover. Therefore, in order to have asymptotically sublinear regret, one would need to refine the cover as the number of iterations grows. Our method does not require explicitly computing a cover, since it samples directly from a distribution defined on $S$. It has the potential of being more computationally tractable (since one does not need to compute and refine a cover) but ultimately, its complexity is that of sampling from the sequence of distributions $(x^{(t)})$, thus studying the computational complexity of the proposed dual averaging method requires making additional assumptions on the family of loss functions and the feasible set $S$.

A related problem is bandit learning, in which the decision maker plays, at iteration $t$, an action $s^{(t)}$ drawn from the distribution $x^{(t)}$, then only observes the loss of that action, $\ell^{(t)}(s^{(t)})$, as opposed to the full loss function $\ell^{(t)}$. This problem is studied for example in~\cite{bubeck2011x}, for Lipschitz losses, and when the feasible set $S$ is given by an explicit hierarchical formulation. We are currently investigating extensions of this work to the bandit setting.

\newpage
\bibliography{DA-arXiv}
\bibliographystyle{plain}

\appendix
\section{Omitted proofs}

\subsection{Proof of Proposition~\ref{prop:smoothness_duality}}
\label{sec:appendix_proofs}

\paragraph*{Proposition~\ref{prop:smoothness_duality}}
If $\psi$ is $\ell_\psi$-strongly convex with respect to $\|\cdot \|$, then $\psi^*$ is $\frac{1}{\ell_\psi}$-smooth with respect to the dual norm $\| \cdot \|_*$.

\begin{proof}
Let $y_1, y_2 \in E^*$, and $x_i  = \nabla \psi^*(y_i)$. Since $x_i$ is the minimizer of the convex function $x \mapsto \psi(x) - \braket{y_i}{x}$, we have, by first-order optimality,
\al{
\braket{\nabla \psi(x_i) - y_i}{x - x_i} \geq 0 \ \forall x \in \Xcal
}
In particular, we have
\al{
\braket{\nabla \psi(x_1) - y_1}{x_2 - x_1} \geq 0 \\
\braket{\nabla \psi(x_2) - y_2}{x_1 - x_2} \geq 0
}
and summing both inequalities,
\[
\braket{y_2 - y_1}{x_2 - x_1} \geq \braket{\nabla \psi(x_2) - \nabla \psi(x_1)}{x_2 - x_1}
\]
By definition of the Bregman divergence, we have
\al{
\braket{y_2 - y_1}{x_2 - x_1}
&= \braket{\nabla \psi(x_2) - \nabla \psi(x_1)}{x_2 - x_1} \\
&= D_\psi(x_1, x_2) + D_\psi(x_2, x_1) \\
&\geq \ell_\psi \|x_2 - x_1\|^2
}
and by definition of the dual norm, we have $\braket{y_2 - y_1}{x_2 - x_1} \leq \|y_2 - y_1\|_* \|x_2 - x_1\|$, since $\|y\|^* = \sup_{\|x\| \leq 1} \braket{x}{y} \geq \braket{\frac{x}{\|x\|}}{y}$. Therefore,
\[
\|y_2 - y_1\|_* \|x_2 - x_1\| \geq \ell_\psi \|x_2 - x_1\|^2
\]
rearranging, we have $\|x_2 - x_1\| \leq \frac{1}{\ell_\psi} \|y_2 - y_1\|_*$, i.e.
\begin{equation}
\label{eq:lipschitz_gradient}
\|\nabla \psi^*(y_2) - \nabla \psi^*(y_1)\| \leq \frac{1}{\ell_\psi} \| y_2 - y_1\|_*
\end{equation}
Now by definition of the Bregman divergence, we have
\al{
D_{\psi^*}(x, y) 
&= \psi^*(x) - \psi^*(y) - \braket{\nabla \psi^*(y)}{x - y} \\
&= \int_0^1 \braket{\nabla \psi^*(y + t(x-y)) - \nabla \psi^*(y) }{x - y} dt \\
&\leq \|y - x\|_* \int_0^1 \|\nabla \psi^*(y+ t(x-y)) - \nabla \psi^*(y)\| dt \\
&\leq \|y - x\|_* \int_0^1 \frac{1}{\ell_\psi} \|y + t(x-y) - y\|_* dt &\text{by~\eqref{eq:lipschitz_gradient}}\\
&\leq \frac{1}{\ell_\psi} \|x - y\|_*^{2} \int_0^1t dt \\
&= \frac{1}{\ell_\psi} \|x - y\|_*^{2} \frac{1}{2}
}

\end{proof}

\subsection{Proof of Theorem~\ref{thm:f-div-sublinear-regret}.1}
\paragraph{Theorem \ref{thm:f-div-sublinear-regret}.1}
Under the assumptions of Theorem~\ref{thm:f-div-sublinear-regret}, suppose that there exists $\epsilon > 0$, $\nu \geq 0$ such that
\[
f_\phi(x) = \Ocal\parenth{ x^{1+\epsilon} (\ln x)^\nu }
\]
and the learning rates are taken to be $\eta_t = \Theta\parenth{(\ln t)^{\frac{\nu}{2}}t^{-\alpha}}$, $\alpha = \frac{1}{2+n\epsilon}$ then
\[
\frac{R^{(t)}}{t} = \Ocal\parenth{ (\ln t)^{\frac{\nu}{2}}t^{-\frac{1}{2+n\epsilon}} }.
\]
\begin{proof}
The proof is similar to that of Theorem~\ref{thm:f-div-sublinear-regret}. Let $(d_t)$ be a decreasing sequence which converges to $0$. By Proposition~\ref{prop:f-div-regret},
\[
\frac{R^{(t)}}{t} \leq \frac{M^2}{2\ell_\psi} \frac{\sum_{\tau = 1}^t \eta_\tau}{t} + {L} D(S) d_t + \frac{1}{t\eta_t} vd_t^nf_\phi\parenth{\frac{1}{vd_t^n}} 
\]
where $\sum_{\tau = 1}^t \eta_\tau = \Ocal((\ln t)^\frac{\nu}{2} t^{1-\alpha}$, $\frac{1}{t\eta_t} = \Ocal\parenth{ (\ln t)^{-\frac{\nu}{2}} t^{\alpha - 1}}$, and by assumption on $f_\phi$,
\al{
vd_t^n f_\phi\parenth{\frac{1}{vd_t^n}} \leq C v^{-\epsilon} d_t^{-n\epsilon} \parenth{\ln \frac{1}{v} + n\ln \frac{1}{d_t}}^\nu
= \Ocal\parenth{d_t^{-n\epsilon} \parenth{\ln \frac{1}{d_t}}^\nu}
}
Combining these bounds, we have
\al{
\frac{R^{(t)}}{t} 
&= \Ocal\parenth{ 
(\ln t)^\frac{\nu}{2} t^{-\alpha} + d_t + d_t^{-n\epsilon} t^{-(1 - \alpha)}(\ln t)^{-\frac{\nu}{2}} \parenth{\ln \frac{1}{d_t}}^\nu}
}
Taking $d_t = t^{-\frac{1-\alpha}{1+n\epsilon}}$, the bound becomes 
\al{
\frac{R^{(t)}}{t} 
&=\Ocal\parenth{ 
(\ln t)^\frac{\nu}{2} t^{-\alpha} + t^{-\frac{(1-\alpha)}{1+n\epsilon}}(\ln t)^{\frac{\nu}{2}}
}
}
which proves the claim.
\end{proof}

\end{document}